\newtheorem{assumption}{Assumption}[section]
\newtheorem{definition}{Definition}[section]
\newtheorem{theorem}{Theorem}[section]
\newtheorem{problem}{Problem}
\icmltitlerunning{Submission and Formatting Instructions for ICML 2021}
\begin{document}

\twocolumn[
\icmltitle{Heterogeneous Risk Minimization}



\begin{icmlauthorlist}
\icmlauthor{Jiashuo Liu}{cst}
\icmlauthor{Zheyuan Hu}{cst}
\icmlauthor{Peng Cui}{cst}
\icmlauthor{Bo Li}{sem}
\icmlauthor{Zheyan Shen}{cst}
\end{icmlauthorlist}

\icmlaffiliation{cst}{Department of Computer Science and Technology, Tsinghua University, Beijing, China; Email: \{liujiashuo77, zyhu2001\}@gmail.com, cuip@tsinghua.edu.cn, shenzy17@mails.tsinghua.edu.cn.}
\icmlaffiliation{sem}{School of Economics and Management, Tsinghua University, Beijing, China; Email: libo@sem.tsinghua.edu.cn}
\icmlcorrespondingauthor{Peng Cui}{cuip@tsinghua.edu.cn}

\icmlkeywords{Machine Learning, ICML}

\vskip 0.3in
]



\printAffiliationsAndNotice{} 

\begin{abstract}
Machine learning algorithms with empirical risk minimization usually suffer from poor generalization performance due to the greedy exploitation of correlations among the training data, which are not stable under distributional shifts. 
Recently, some invariant learning methods for out-of-distribution (OOD) generalization have been proposed by leveraging multiple training environments to find invariant relationships. 
However, modern datasets are frequently assembled by merging data from multiple sources without explicit source labels. 
The resultant unobserved heterogeneity renders many invariant learning methods inapplicable. 
In this paper, we propose Heterogeneous Risk Minimization (HRM) framework to achieve joint learning of latent heterogeneity among the data and invariant relationship, which leads to stable prediction despite distributional shifts.
We theoretically characterize the roles of the environment labels in invariant learning and justify our newly proposed HRM framework. 
Extensive experimental results validate the effectiveness of our HRM framework.

\end{abstract}
\section{Introduction}
The effectiveness of machine learning algorithms with empirical risk minimization (ERM) relies on the assumption that the testing and training data are identically drawn from the same distribution, which is known as the IID hypothesis.
However, distributional shifts between testing and training data are usually inevitable due to data selection biases or unobserved confounders that widely exist in real data.
Under such circumstances, machine learning algorithms with ERM usually suffer from poor generalization performance due to the greedy exploitation of correlations among the training data, which are not stable under distributional shifts. 
How to guarantee a machine learning algorithm with out-of-distribution (OOD) generalization ability and stable performances under distributional shifts is of paramount significance, especially in high-stake applications such as medical diagnosis, criminal justice, and financial analysis etc \cite{kukar2003transductive, berk2018fairness, rudin2018optimized}.

There are mainly two branches of methods proposed to solve the OOD generalization problem, namely distributionally robust optimization (DRO) \cite{DBLP:journals/mp/EsfahaniK18, duchi2018learning, SinhaCertifying,sagawa2019distributionally} and invariant learning \cite{arjovsky2019invariant, DBLP:journals/corr/abs-2008-01883, DBLP:rationalization}.
DRO methods aim to optimize the worst-performance over a distribution set to ensure their OOD generalization performances. 
While DRO is a powerful family of methods, it is often argued for its over-pessimism problem when the distribution set is large \cite{does,frogner2019incorporating}.
From another perspective, invariant learning methods propose to exploit the causally invariant correlations(rather than varying spurious correlations) across multiple training environments, resulting in out-of-distribution (OOD) optimal predictors.
However, the effectiveness of such methods relies heavily on the quality of training environments, and the intrinsic role of environments in invariant learning remains vague in theory.
More importantly, modern big data are frequently assembled by merging data from multiple sources without explicit source labels. 
The resultant unobserved heterogeneity renders these invariant learning methods inapplicable.

In this paper, we propose Heterogeneous Risk Minimization (HRM), an optimization framework to achieve joint learning of the latent heterogeneity among the data and the invariant predictor, which leads to better generalization ability despite distributional shifts. 
More specifically, we theoretically characterize the roles of the environment labels in invariant learning, which motivates us to design two modules in the framework corresponding to heterogeneity identification and invariant learning respectively. 
We provide theoretical justification on the mutual promotion of these two modules, which resonates the joint optimization process in a reciprocal way.
Extensive experiments in both synthetic and real-world experiments datasets demonstrate the superiority of HRM in terms of average performance, stability performance as well as worst-case performance under different settings of distributional shifts.
We summarize our contributions as following:

    {\bf 1.} We propose the novel HRM framework for OOD generalization without environment labels, in which heterogeneity identification and invariant prediction are jointly optimized.
    
    {\bf 2.} We theoretically characterize the role of environments in invariant learning from the perspective of heterogeneity, based on which we propose a novel clustering method for heterogeneity identification from heterogeneous data.
    
    {\bf 3.} We theoretically justify the mutual promotion relationship between heterogeneity identification and invariant learning, resonating the joint optimization process in HRM.
\section{Problem Formulation}
\label{sec:problem setting}
\subsection{OOD and Maximal Invariant Predictor}
Following \cite{arjovsky2019invariant, DBLP:rationalization}, we consider a dataset $D=\left\{D^e\right\}_{e\in \mathrm{supp}(\mathcal{E}_{tr})}$, which is a mixture of data $D^e=\left\{(x_i^e, y_i^e)\right\}_{i=1}^{n_e}$ collected from multiple training environments $e\in \mathrm{supp}(\mathcal{E}_{tr})$, $x_i^e\in\mathcal{X}$ and $y_i^e\in\mathcal{Y}$ are the $i$-th data and label from environment $e$ respectively and $n_e$ is number of samples in environment $e$. 
Environment labels are unavailable as in most real applications.
$\mathcal{E}_{tr}$ is a random variable on indices of training environments and $P^e$ is the distribution of data and label in environment $e$.

The goal of this work is to find a predictor $f(\cdot):\mathcal{X}\rightarrow\mathcal{Y}$ with good out-of-distribution generalization performance, which can be formalized as:
\begin{small}
\begin{equation}
\label{equ:OOD}
	\arg\min_f \max_{e\in\mathrm{supp}(\mathcal{E})} \mathcal{L}(f|e) 
\end{equation}
\end{small}
where $\mathcal{L}(f|e)=\mathbb{E}[l(f(X),Y)|e]=\mathbb{E}^e[l(f(X^e),Y^e)]$ is the risk of predictor $f$ on environment $e$, and $l(\cdot,\cdot):\mathcal{Y}\times\mathcal{Y}\rightarrow\mathbb{R}^+$ is the loss function.
$\mathcal{E}$ is the random variable on indices of all possible environments such that $\mathrm{supp}(\mathcal{E})\supset \mathrm{supp}(\mathcal{E}_{tr})$. 
Usually, for all $e\in \mathrm{supp}(\mathcal{E})\setminus \mathrm{supp}(\mathcal{E}_{tr})$, the data and label distribution $P^e(X,Y)$ can be quite different from that of training environments $\mathcal{E}_{tr}$. 
Therefore, the problem in Equation \ref{equ:OOD} is referred to as Out-of-Distribution (OOD) Generalization problem \cite{arjovsky2019invariant}. 

Without any prior knowledge or structural assumptions, it is impossible to figure out the OOD generalization problem, since one cannot characterize the unseen latent environments in $\mathrm{supp}(\mathcal{E})$. 
A commonly used assumption in invariant learning literature \cite{2015Invariant, DBLP:conf/icml/GongZLTGS16, arjovsky2019invariant,DBLP:conf/aaai/KuangX0A020, DBLP:rationalization} is as follow:
\begin{assumption}
	\label{assumption1: main}
	There exists random variable $\Phi^*(X)$ such that the following properties hold: 
	
	a. $\mathrm{Invariance\ property}$:\quad for all $e, e' \in \mathrm{supp}(\mathcal{E})$, we have  $P^e(Y|\Phi^*(X)) = P^{e'}(Y|\Phi^*(X))$ holds.
	
	b. $\mathrm{Sufficiency\ property}$: $Y = f(\Phi^*) + \epsilon,\ \epsilon \perp X$. 
\end{assumption}
This assumption indicates invariance and sufficiency for predicting the target $Y$ using $\Phi^*$, which is known as invariant covariates or representations with stable relationships with $Y$ across different environments $e \in \mathcal{E}$.
In order to acquire the invariant predictor $\Phi^*(X)$, a branch of work to find maximal invariant predictor \cite{DBLP:rationalization, DBLP:journals/corr/abs-2008-01883} has been proposed, where the invariance set and the corresponding maximal invariant predictor are defined as:
\begin{definition}
	\label{definition2:invariance set}
	The invariance set $\mathcal{I}$ with respect to $\mathcal{E}$ is defined as:
	\begin{small}
	\begin{equation}
	    \begin{aligned}
	    \mathcal{I}_{\mathcal{E}} &= \{\Phi(X): Y \perp \mathcal{E}|\Phi(X)\}\\
	            &= \{\Phi(X): H[Y|\Phi(X)]=H[Y|\Phi(X),\mathcal{E}]\}
	    \end{aligned}
	\end{equation}
	\end{small}
	where $H[\cdot]$ is the Shannon entropy of a random variable. 
	The corresponding maximal invariant predictor (MIP) of $\mathcal{I}_{\mathcal{E}}$ is defined as:
	\begin{small}
	\begin{equation}
	    S = \arg\max_{\Phi \in \mathcal{I}_{\mathcal{E}}}I(Y;\Phi)
	\end{equation}
	\end{small}
    where $I(\cdot;\cdot)$ measures Shannon mutual information between two random variables.
\end{definition}
Here we prove that the MIP $S$ can can guarantee OOD optimality, as indicated in Theorem \ref{theorem:OOD opt}.
The formal statement of Theorem \ref{theorem:OOD opt} as well as its proof can be found in appendix.
\begin{theorem}
	\label{theorem:OOD opt}
	(Informal)\quad For predictor $\Phi^*(X)$ satisfying Assumption \ref{assumption1: main}, $\Phi^*$ is the maximal invariant predictor with respect to $\mathcal{E}$ and the solution to OOD problem in equation \ref{equ:OOD} is $\mathbb{E}_Y[Y|\Phi^*] = \arg\min_f \sup_{e\in\mathrm{supp}(\mathcal{E})}\mathbb{E}[\mathcal{L}(f)|e]$.
\end{theorem}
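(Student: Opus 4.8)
The plan is to split the statement into two assertions and treat them separately: first, that $\Phi^*$ is the maximal invariant predictor (MIP) with respect to $\mathcal{E}$; second, that the induced conditional-mean predictor $\mathbb{E}[Y|\Phi^*]$ solves the min-max OOD problem in Equation~\ref{equ:OOD}. Throughout I would use only the two clauses of Assumption~\ref{assumption1: main} together with standard information-theoretic identities (chain rule and data-processing inequality) and the Bayes-optimality of the conditional mean under squared loss.

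For the first assertion I would proceed in two steps, \emph{membership} and \emph{maximality}. Membership: the invariance property (Assumption~\ref{assumption1: main}a) states $P^e(Y|\Phi^*) = P^{e'}(Y|\Phi^*)$ for all $e,e'\in\mathrm{supp}(\mathcal{E})$, which is exactly $Y\perp\mathcal{E}\,|\,\Phi^*$, i.e. $H[Y|\Phi^*]=H[Y|\Phi^*,\mathcal{E}]$, the defining condition of $\mathcal{I}_{\mathcal{E}}$ in Definition~\ref{definition2:invariance set}; hence $\Phi^*\in\mathcal{I}_{\mathcal{E}}$. Maximality: here I invoke the sufficiency property (Assumption~\ref{assumption1: main}b). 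Since $Y=f(\Phi^*)+\epsilon$ with $\epsilon\perp X$ and $\Phi^*=\Phi^*(X)$, conditioning further on $X$ does not change the law of $Y$ given $\Phi^*$, so $Y\perp X\,|\,\Phi^*$ and therefore $I(Y;X|\Phi^*)=0$. Because $\Phi^*$ is a deterministic function of $X$, the pair $(X,\Phi^*)$ carries the same information as $X$, and the chain rule gives $I(Y;X)=I(Y;\Phi^*)+I(Y;X|\Phi^*)=I(Y;\Phi^*)$. The data-processing inequality then yields $I(Y;\Phi)\le I(Y;X)=I(Y;\Phi^*)$ for every $\Phi=\Phi(X)$, in particular for every $\Phi\in\mathcal{I}_{\mathcal{E}}$; combined with $\Phi^*\in\mathcal{I}_{\mathcal{E}}$, this shows $\Phi^*$ attains the maximum defining the MIP.

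For the second assertion I would specialize to the squared loss, so that $\mathbb{E}[Y|\Phi^*]$ is the relevant Bayes target, and set $g(X):=\mathbb{E}[Y|\Phi^*(X)]$. Sufficiency gives $g(X)=f(\Phi^*)$ after absorbing $\mathbb{E}[\epsilon]$ (which we may take to be zero). The key observation is that the environment-wise Bayes predictor coincides with $g$ in every $e$: since $\epsilon\perp X$, we have $\mathbb{E}^e[Y|X]=f(\Phi^*)+\mathbb{E}^e[\epsilon|X]=f(\Phi^*)=g(X)$, independent of $e$. Thus $g$ minimizes $\mathcal{L}(\cdot|e)$ for all $e$ simultaneously, and by the invariance property the residual law $P^e(\epsilon)$, hence the attained risk $\mathcal{L}(g|e)=\mathbb{E}[\epsilon^2]$, is constant in $e$. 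Consequently, for any competitor $f'$ and any $e$ one has $\mathcal{L}(f'|e)\ge\mathcal{L}(g|e)=\mathbb{E}[\epsilon^2]$, so $\sup_{e}\mathcal{L}(f'|e)\ge\mathbb{E}[\epsilon^2]=\sup_{e}\mathcal{L}(g|e)$, identifying $g=\mathbb{E}[Y|\Phi^*]$ as the min-max optimizer.

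I expect the main obstacle to be the maximality step, specifically the clean passage from the additive-noise sufficiency statement $\epsilon\perp X$ to the conditional independence $Y\perp X\,|\,\Phi^*$ and the ensuing equality $I(Y;\Phi^*)=I(Y;X)$; one must be careful that the data-processing inequality is applied only to predictors that are genuine functions of $X$ (which holds throughout $\mathcal{I}_{\mathcal{E}}$), and that any ties in the argmax are handled by noting we need only that $\Phi^*$ attains the maximal value. A secondary point requiring care is the restriction to the squared loss in the second part, needed so that the stated conditional-mean predictor is exactly the Bayes-optimal object; for a general loss $l$ one would replace $\mathbb{E}[Y|\Phi^*]$ by the corresponding risk-minimizing functional of the invariant conditional $P(Y|\Phi^*)$ and repeat the same per-environment optimality argument.
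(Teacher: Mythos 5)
Your first assertion (that $\Phi^*$ is the MIP) is handled correctly: membership via the invariance property and maximality via $I(Y;X|\Phi^*)=0$ plus data processing is sound, and is if anything cleaner than the paper's route, which instead decomposes an arbitrary competitor $\Phi'$ as $\sigma(\Phi^*,\Phi_{extra})$ via the functional representation lemma and argues $I(Y;\Phi')=I(f(\Phi^*);\Phi^*)$. Both establish $I(Y;\Phi)\le I(Y;\Phi^*)$ for all candidates, so this half is fine.

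The second assertion is where there is a genuine gap. Your key step is $\mathbb{E}^e[Y|X]=f(\Phi^*)+\mathbb{E}^e[\epsilon|X]=f(\Phi^*)$ \emph{for every environment} $e$, which requires $\epsilon\perp X$ to hold within each $P^e$ separately. Under that reading, $\mathbb{E}[Y|\Phi^*]$ is the per-environment Bayes predictor simultaneously in all environments, the min-max claim becomes trivial, and --- tellingly --- the heterogeneity assumption (Assumption \ref{assumption:heterogeneity}) is never used. But that reading is inconsistent with the setting the theorem is meant to address: the paper explicitly allows $P^e(Y|\Psi^*)$ to vary arbitrarily across environments, so within a given $e$ a competitor $h(\Phi^*,\Psi^*)$ can strictly beat $\mathbb{E}[Y|\Phi^*]$ by exploiting the environment-specific dependence between $\Psi^*$ and $Y$ (this is exactly the spurious-correlation phenomenon that makes ERM fail, e.g.\ in the anti-causal construction where $\Psi^*=\theta_\psi Y+\text{noise}$, so that $X\not\perp\epsilon$ within an environment). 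If your step were valid, ERM would already recover the invariant predictor and the theorem would have no content.

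The paper's proof closes this gap differently: for an arbitrary competitor $h$ and each $e\in\mathrm{supp}(\mathcal{E})$, it invokes Assumption \ref{assumption:heterogeneity} to produce an adversarial environment $e'$ with density $Q(\Phi,\Psi,Y)=P(\Phi,Y)\,Q(\Psi)$, i.e.\ one in which $\Psi$ is independent of $(\Phi,Y)$. In that environment $h(\cdot,\psi)$ is, for each fixed $\psi$, a predictor depending on $\Phi$ alone, so Bayes optimality of the conditional mean under a Bregman loss gives $\mathbb{E}[D(h(X),Y)|e']\ge\mathbb{E}[D(h^*(X),Y)|e]$, and taking suprema yields the min-max claim. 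To repair your argument you would need to replace the ``simultaneously Bayes-optimal in every $e$'' claim with this worst-case construction, and to state where the existence of $e'$ inside $\mathrm{supp}(\mathcal{E})$ comes from. Your restriction to squared loss is a more minor deviation (the paper proves the statement for general Bregman losses, of which squared loss is a special case).
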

Recently, some works suppose the availability of data from multiple environments with environment labels, wherein they can find MIP \cite{DBLP:rationalization, DBLP:journals/corr/abs-2008-01883}.
However, they rely on the underlying assumption that the invariance set $\mathcal{I}_{\mathcal{E}_{tr}}$ of $\mathcal{E}_{tr}$ is exactly the invariance set $\mathcal{I}_{\mathcal{E}}$ of all possible unseen environments $\mathcal{E}$, which cannot be guaranteed as shown in Theorem \ref{theorem:subset}.
\begin{theorem}
	\label{theorem:subset}
    $\mathcal{I}_{\mathcal{E}} \subseteq \mathcal{I}_{\mathcal{E}_{tr}}$
\end{theorem}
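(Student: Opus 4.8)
The plan is to unpack the set-membership condition into a statement about the invariance of the conditional law of $Y$, and then observe that such invariance over a larger index set automatically descends to any subset of that index set. Concretely, I would first note that $\Phi \in \mathcal{I}_{\mathcal{E}}$ is equivalent to the condition $Y \perp \mathcal{E} \mid \Phi(X)$, which in turn says that the conditional distribution $P^e(Y \mid \Phi(X))$ does not depend on the value of $e$; that is, there is a single law $P(Y \mid \Phi(X))$ with $P^e(Y \mid \Phi(X)) = P(Y \mid \Phi(X))$ for every $e \in \mathrm{supp}(\mathcal{E})$. This is the natural reading of the independence formulation in Definition \ref{definition2:invariance set} and is also what the Invariance property of Assumption \ref{assumption1: main} asserts for the true $\Phi^*$.

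The core step is then immediate from the hypothesis $\mathrm{supp}(\mathcal{E}_{tr}) \subseteq \mathrm{supp}(\mathcal{E})$ stated in Section \ref{sec:problem setting}. Fix $\Phi \in \mathcal{I}_{\mathcal{E}}$. For any $e \in \mathrm{supp}(\mathcal{E}_{tr})$ we also have $e \in \mathrm{supp}(\mathcal{E})$, so the equality $P^e(Y \mid \Phi(X)) = P(Y \mid \Phi(X))$ continues to hold. Hence $P^e(Y \mid \Phi(X))$ is constant across all $e \in \mathrm{supp}(\mathcal{E}_{tr})$, which is precisely the statement $Y \perp \mathcal{E}_{tr} \mid \Phi(X)$, i.e.\ $\Phi \in \mathcal{I}_{\mathcal{E}_{tr}}$. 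Since $\Phi$ was arbitrary, $\mathcal{I}_{\mathcal{E}} \subseteq \mathcal{I}_{\mathcal{E}_{tr}}$.

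If one prefers to argue directly through the entropic characterization $H[Y \mid \Phi(X)] = H[Y \mid \Phi(X), \mathcal{E}]$, the same reduction works by rewriting it as the vanishing of the conditional mutual information $I(Y; \mathcal{E} \mid \Phi(X)) = 0$. The subtlety to watch is that $\mathcal{E}_{tr}$ is not literally $\mathcal{E}$ but its restriction to the training support, so the marginal weights over environments change; consequently $I(Y; \mathcal{E}_{tr} \mid \Phi(X))$ is computed under the conditional law of $\mathcal{E}$ given $\mathcal{E} \in \mathrm{supp}(\mathcal{E}_{tr})$. I expect this bookkeeping to be the only delicate point: one must verify that zero conditional mutual information is a \emph{pointwise} property of the conditional distributions (constancy of $P^e(Y \mid \Phi(X))$ in $e$) rather than an averaged quantity that could be sensitive to reweighting. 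Once that is established, restricting the environment index to any reweighted sub-support keeps the mutual information at zero, which closes the argument and confirms that the training invariance set can only be larger than the true one.
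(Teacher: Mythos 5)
Your argument is correct and is essentially the paper's own proof: the paper simply observes that since $\mathrm{supp}(\mathcal{E}_{tr})\subseteq\mathrm{supp}(\mathcal{E})$, any $\Phi$ invariant over all environments is a fortiori invariant over the training environments, which is exactly your core step of noting that constancy of $P^e(Y\mid\Phi(X))$ over the larger index set descends to the subset. Your additional care in reconciling the entropic characterization with the pointwise-constancy reading is a sensible elaboration but does not change the route.
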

As shown in Theorem \ref{theorem:subset} that $\mathcal{I}_{\mathcal{E}} \subseteq \mathcal{I}_{\mathcal{E}_{tr}}$, the learned predictor is only invariant to such limited environments $\mathcal{E}_{tr}$ but is not guaranteed to be invariant with respect to all possible environments $\mathcal{E}$.

Here we give a toy example in Table \ref{tab:intuition} to illustrate this.
We consider a binary classification between cats and dogs, where each photo contains 3 features, animal feature $X_1\in\left\{\text{cat},\text{dog}\right\}$, a background feature $X_2\in\left\{\text{on grass}, \text{in water}\right\}$ and the photographer's signature feature $X_3\in\left\{\text{Irma}, \text{Eric}\right\}$. Assume all possible testing environments $\mathrm{supp}(\mathcal{E})=\left\{e_1,e_2,e_3,e_4,e_5,e_6\right\}$ and the train environment $\mathrm{supp}(\mathcal{E}_{tr})=\left\{e_5,e_6\right\}$, then $\mathcal{I}_\mathcal{E}=\left\{\Phi|\Phi=\Phi(X_1)\right\}$ while $\mathcal{I}_{\mathcal{E}_{tr}}=\left\{\Phi|\Phi=\Phi(X_1,X_2)\right\}$.
The reason is that $e_5,e_6$ only tell us $X_3$ cannot be included in the invariance set but cannot exclude $X_2$.
But if $e_5$ and $e_6$ can be further divided into $e_1,e_2$ and $e_3,e_4$ respectively, the invariance set $\mathcal{I}_{\mathcal{E}_{tr}}$ becomes $\mathcal{I}_{\mathcal{E}_{tr}} = \mathcal{I}_{\mathcal{E}} = \{\Phi(X_1)\}$. 

This example shows that the manually labeled environments may not be sufficient to achieve MIP, not to mention the cases where environment labels are not available. 
This limitation necessitates the study on how to exploit the latent intrinsic heterogeneity in training data (like $e_5$ and $e_6$ in the above example) to form more refined environments for OOD generalization.
The environments need to be subtly uncovered, in the sense of OOD generalization problem, as indicated by Theorem \ref{theorem:useless env}, not all environments are helpful to tighten the invariance set.
\begin{theorem}
    \label{theorem:useless env}
    Given set of environments $\mathrm{supp}(\hat{\mathcal{E})}$, denote the corresponding invariance set $\mathcal{I}_{\hat{\mathcal{E}}}$ and the corresponding maximal invariant predictor $\hat{\Phi}$. For one newly-added environment $e_{new}$ with distribution $P^{new}(X,Y)$, if $P^{new}(Y|\hat{\Phi}) = P^e(Y|\hat{\Phi})$ for $e \in \mathrm{supp}(\hat{\mathcal{E}})$, the invariance set constrained by $\mathrm{supp}(\hat{\mathcal{E}})\cup \{e_{new}\}$ is equal to $\mathcal{I}_{\hat{\mathcal{E}}}$.
\end{theorem}

\begin{small}
\begin{table}[t]
    \centering
    \resizebox{\linewidth}{!}{
    \begin{tabular}{c|c|c|c|c|c|c}
        \small
          & \multicolumn{3}{c|}{Class 0 (Cats)} & \multicolumn{3}{c}{Class 1 (Dogs)}\\\hline
         Index & $X_1$ & $X_2$ & $X_3$ & $X_1$ & $X_2$ & $X_3$\\\hline
         $e_1$ & Cats & Water & Irma & Dogs & Grass & Eric\\\hline
         $e_2$ & Cats & Grass & Eric & Dogs & Water & Irma\\\hline
         $e_3$ & Cats & Water & Eric & Dogs & Grass & Irma\\\hline
         $e_4$ & Cats & Grass & Irma & Dogs & Water & Eric\\\hline
         $e_5$ & \multicolumn{6}{c}{Mixture: 90\% data from $e_1$ and 10\% data from $e_2$} \\\hline
         $e_6$ & \multicolumn{6}{c}{Mixture: 90\% data from $e_3$ and 10\% data from $e_4$} \\\hline
    \end{tabular}}
    \caption{A Toy Example for the difference between $\mathcal{I}_{\mathcal{E}}$ and $\mathcal{I}_{\mathcal{E}_{tr}}$.}
    \label{tab:intuition}
    \vskip -0.1in
\end{table}
\end{small}

\subsection{Problem of Heterogeneous Risk Minimization}
Besides Assumption 2.1, we make another assumption on the existence of heterogeneity in training data as:
\begin{assumption}
    \label{assumption:heterogeneity}
    $\mathrm{Heterogeneity\ Assumption}$.\\
    For random variable pair $(X,\Phi^*)$ and $\Phi^*$ satisfying Assumption \ref{assumption1: main}, using functional representation lemma \cite{networkinformation}, there exists random variable $\Psi^*$ such that $X = X(\Phi^*, \Psi^*)$, then we assume $P^e(Y|\Psi^*)$ can arbitrary change across environments $e\in \mathrm{supp}(\mathcal{E})$.
\end{assumption}

The heterogeneity among provided environments can be evaluated by the compactness of the corresponding invariance set as $|\mathcal{I}_{\mathcal{E}}|$.
Specifically, smaller $|\mathcal{I}_{\mathcal{E}}|$ leads to higher heterogeneity, since more variant features can be excluded.
Based on the assumption, we come up with the problem of heterogeneity exploitation for OOD generalization.
\begin{problem}
Heterogeneous Risk Minimization.\\
Given heterogeneous dataset $D=\left\{D^e\right\}_{e\in \mathrm{supp}(\mathcal{E}_{latent})}$ without environment labels, the task is to generate environments $\mathcal{E}_{tr}$ with minimal $|\mathcal{I}_{\mathcal{E}_{tr}}|$ and learn invariant model under learned $\mathcal{E}_{tr}$ with good OOD performance.
\end{problem}
Theorem \ref{theorem:useless env} together with Assumption \ref{assumption:heterogeneity} indicate that, to better constrain $\mathcal{I}_{\mathcal{E}_{tr}}$, the effective way is to generate environments with varying $P(Y|\Psi^*(X))$ that can exclude variant features from $\mathcal{I}_{\mathcal{E}_{tr}}$.
Under this problem setting,  we encounter the circular dependency: first we need variant $\Psi^*$ to generate heterogeneous environments $\mathcal{E}_{tr}$; then we need $\mathcal{E}_{tr}$ to learned invariant $\Phi^*$ as well as variant $\Psi^*$. 
Furthermore, there exists positive feedback between these two steps.
When acquiring $\mathcal{E}_{tr}$ with tighter $\mathcal{I}_{\mathcal{E}_{tr}}$, more invariant predictor $\Phi(X)$ (i.e. a better approximation of MIP) can be found, which will further bring a clearer picture of variant parts, and therefore promote the generation of $\mathcal{E}_{tr}$.
With this notion, we propose our framework for Heterogeneous Risk Minimization (HRM) which leverages the mutual promotion between the two steps and conduct joint optimization.

\section{Method}
In this work, we temporarily focus on a simple but general setting, where $X=[\Phi^*, \Psi^*]^T \in \mathbb{R}^d$ in raw feature level and $\Phi^*, \Psi^*$ satisfy Assumption \ref{assumption1: main}.
Under this setting, 
Our Heterogeneous Risk Minimization (HRM) framework contains two interactive parts, the frontend $\mathcal{M}_c$ for heterogeneity identification and the backend $\mathcal{M}_p$ for invariant prediction. 
The general framework is shown in Figure \ref{fig:model}.
\begin{figure}[h]
    \centering
    \includegraphics[width=0.95\linewidth]{./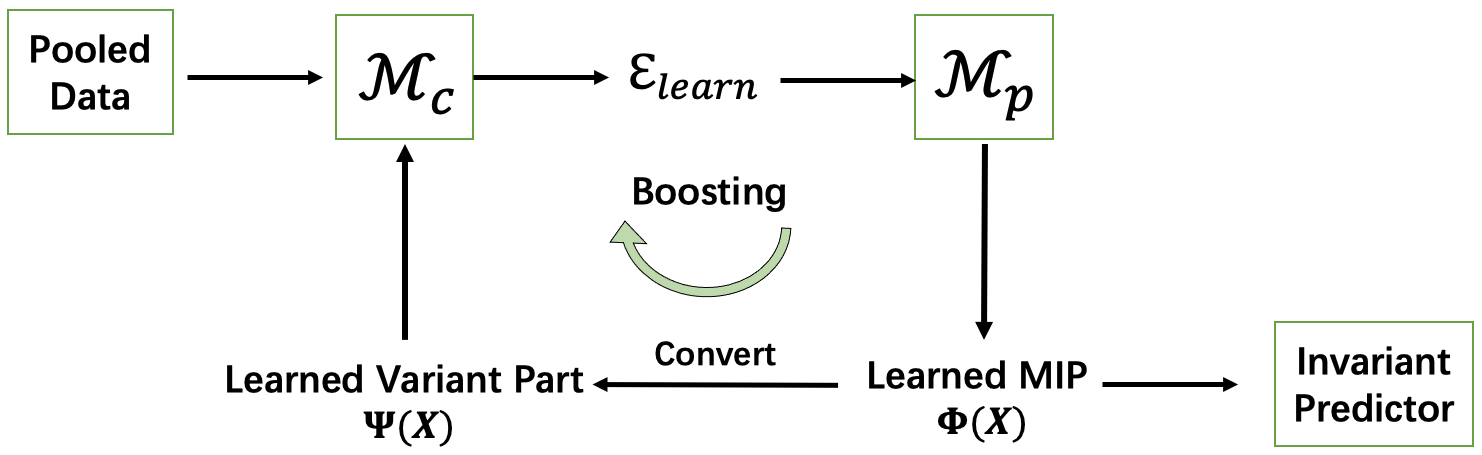}
    \caption{The framework of HRM.}
    \label{fig:model}
\end{figure}
Given the pooled heterogeneous data, it starts with the heterogeneity identification module $\mathcal{M}_c$ leveraging the learned variant representation $\Psi(X)$ to generate heterogeneous environments $\mathcal{E}_{learn}$.
Then the learned environments are used by OOD prediction module $\mathcal{M}_p$ to learn the MIP $\Phi(X)$ as well as the invariant prediction model $f(\Phi(X))$. 
After that, we derive the variant $\Psi(X)$ to further boost the module $\mathcal{M}_c$, which is supported by Theorem \ref{theorem:useless env}.
As for the 'convert' step, under our setting, we adopt feature selection in this work, through which more variant feature $\Psi$ can be attained when more invariant feature $\Phi$ is learned.
Specifically, the invariant predictor $\Phi(X)$ is generated as $\Phi(X)=M\odot X$, and the variant part $\Psi(X) = (1-M)\odot X$ correspondingly, where $M \in \left\{0,1\right\}^d$ is the binary invariant feature selection mask.
For instance, for Table \ref{tab:intuition}, $X=[X_1,X_2,X_3]$, the ground truth binary mask is $M=[1,0,0]$.  
In this way, the better $\Phi$ is learned, the better $\Psi$ can be obtained.
Note that we use the soft selection which is more flexible and general in our algorithm with $M \in [0,1]^d$.

The whole framework is jointly optimized, so that the mutual promotion between heterogeneity identification and invariant learning can be fully leveraged. 

\subsection{Implementation of $\mathcal{M}_p$}
Here we introduce our invariant prediction module $\mathcal{M}_p$, which takes multiple environments training data $D=\left\{D^e\right\}_{e\in supp(\mathcal{E}_{tr})}$ as input, and outputs the corresponding invariant predictor $f$ and the indices of invariant features $M$ given current environments $\mathcal{E}_{tr}$. 
We combine feature selection with invariant learning under heterogeneous environments, which can select the features with stable/invariant correlations with the label across $\mathcal{E}_{tr}$.
Specifically, the former module can select most informative features with respect to the loss function and latter module ensures the selected features are invariant. 
Their combination ensures $\mathcal{M}_p$ to select the most informative invariant features. 

For invariant learning, we follow the variance penalty regularizer proposed in \cite{DBLP:journals/corr/abs-2008-01883} and simplify it in feature selection scenarios.
The objective function of $\mathcal{M}_p$ with $M\in \{0,1\}^d$ is:
\begin{small}
\begin{align}
    \mathcal{L}^e(M\odot X,Y;\theta) &= \mathbb{E}_{P^e}[\ell(M\odot X^e,Y^e;\theta)]\\
    \mathcal{L}_p(M\odot X,Y;\theta) &= \mathbb{E}_{\mathcal{E}_{tr}}[\mathcal{L}^e] + \lambda \mathrm{trace}(\mathrm{Var}_{\mathcal{E}_{tr}}(\nabla_\theta\mathcal{L}^e))
\end{align}
\end{small}

However, as the optimization of hard feature selection with binary mask $M$ suffers from high variance, we use the soft feature selection with gates taking continuous value in $[0,1]$.
Specifically, following \cite{DBLP:conf/icml/YamadaLNK20}, we approximate each element of $M=[m_1,\dots,m_d]^T$ to clipped Gaussian random variable parameterized by $\mu=[\mu_1, \dots, \mu_d]^T$ as
\begin{small}
    \begin{equation}
    \label{equ:stochastic gates}
        m_i = \max\{0, \min\{1, \mu_i+\epsilon\}\}
    \end{equation}
\end{small}
where $\epsilon$ is drawn from $\mathcal{N}(0, \sigma^2)$.
With this approximation, the objective function with soft feature selection can be written as:
\begin{small}
    \begin{align}
        \mathcal{L}^e(\theta,\mu) = \mathbb{E}_{P^e}\mathbb{E}_M\left[\ell(M\odot X^e,Y^e;\theta)+\alpha \|M\|_0\right]
    \end{align}
\end{small}
where $M$ is a random vector with $d$ independent variables $m_i$ for $i \in [d]$. Under the approximation in Equation \ref{equ:stochastic gates}, $\|M\|_0$ is simply $\sum_{i \in [d]}P(m_i>0)$ and can be calculated as $\|M\|_0 = \sum_{i\in[d]} \mathrm{CDF}(\mu_i/\sigma)$, where $\mathrm{CDF}$ is the standard Gaussian CDF.
We formulate our objective as risk minimization problem:
\begin{small}
    \begin{equation}
    \label{equ:backend_obj}
        \min_{\theta,\mu}\mathcal{L}_p(\theta;\mu) = \mathbb{E}_{\mathcal{E}_{tr}}[\mathcal{L}^e(\theta,\mu)] +  \lambda \mathrm{trace}(\mathrm{Var}_{\mathcal{E}_{tr}}(\nabla_\theta\mathcal{L}^e))
    \end{equation}
\end{small}
where 
\begin{small}
    \begin{equation}
        \mathcal{L}^e(\theta,\mu) = \mathbb{E}_{P^e}\mathbb{E}_M\left[\ell(M\odot X^e,Y^e;\theta)+\alpha \sum_{i\in[d]} \mathrm{CDF}(\mu_i/\sigma) \right]
    \end{equation}
\end{small}
Further, as for linear models, we simply approximate the regularizer $\mathrm{trace}(\mathrm{Var}_{\mathcal{E}_{tr}}(\nabla_\theta\mathcal{L}^e))$ by $\|\mathrm{Var}_{\mathcal{E}_{tr}}(\nabla_\theta\mathcal{L}^e)\odot M\|^2$.
Then we obtain $\Phi(X)$ and $\Psi(X)$ when we obtain $\mu$ as well as $M$.
Further in Section \ref{sec:theorem}, we theoretically prove that the prediction module $\mathcal{M}_p$ is able to learn the MIP with respect to given environments $\mathcal{E}_{tr}$.

\subsection{Implementation of $\mathcal{M}_c$}
{\bf Notation.} $\Psi$ means the learned variant part $\Psi(X)$. $\Delta_K$ means $K$-dimension simplex. $f_\theta(\cdot)$ means the function $f$ parameterized by $\theta$.

The heterogeneity identification module $\mathcal{M}_c$ takes a single dataset as input, and outputs a multi-environment dataset partition for invariant prediction.
We implement it with a clustering algorithm.
As indicated in Theorem \ref{theorem:useless env}, the more diverse $P(Y|\Psi)$ for our generated environments, the better the invariance set $\mathcal{I}$ is.
Therefore, we cluster the data points according to the relationship between $\Psi$ and $Y$, for which we use $P(Y|\Psi)$ as the cluster centre.
Note that $\Psi$ is initialized as $\Psi(X)=X$ in our joint optimization.

Specifically, we assume the $j$-th cluster centre $P_{\Theta_j}(Y|\Psi)$ parameterized by $\Theta_j$ to be a Gaussian around $f_{\Theta_j}(\Psi)$ as $\mathcal{N}(f_{\Theta_j}(\Psi), \sigma^2)$:
\begin{small}
    \begin{equation}
	h_j(\Psi, Y) = P_{\Theta_j}(Y|\Psi) = \frac{1}{\sqrt{2\pi}\sigma} \exp(-\frac{(Y-f_{\Theta_j}(\Psi))^2}{2\sigma^2})
    \end{equation} 
\end{small}
For the given $N=\sum_{e\in\mathcal{E}_{tr}}n_e$ empirical data samples $\mathcal{D}=\{\psi_i(x_i), y_i\}_{i=1}^N$, the empirical distribution is modeled as $\hat{P}_N = \frac{1}{N}\sum_{i=1}^N \delta_i(\Psi, Y)$
where
\begin{small}
\begin{equation}
    \delta_i(\Psi,Y) = 
    \begin{cases}
       1,\quad \mathrm{if\ } \Psi = \psi_i\ \mathrm{and}\ Y = y_i\\
       0, \quad \mathrm{otherwise}
    \end{cases}
\end{equation}
\end{small}
The target of our heterogeneous clustering is to find a distribution in $\mathcal{Q} = \{Q|Q = \sum_{j\in [K]} q_j h_j(\Psi,Y), \bold{q}\in \Delta_K\}$ to fit the empirical distribution best.
Therefore, the objective function of our heterogeneous clustering is:
\begin{small}
    \begin{equation}
    \min_{Q \in \mathcal{Q}} D_{KL}(\hat{P}_N\|Q)
    \end{equation}
\end{small}
The above objective can be further simplified to:
\begin{small}
\begin{equation}
    \label{equ:clustering_final}
    \min_{\Theta, \bold{q}} \left\{\mathcal{L}_c = -\frac{1}{N}\sum_{i=1}^N\log\left[\sum_{j=1}^Kq_j h_j(\psi_i, y_i)\right]\right\}
\end{equation}
\end{small}
As for optimization, we use EM algorithm to optimize the centre parameter $\Theta$ and the mixture weight $\bold{q}$.
After optimizing equation \ref{equ:clustering_final}, for building $\mathcal{E}_{tr}$, we assign each data point to environment $e_j \in \mathcal{E}_{tr}$ with probability:
\begin{small}
    \begin{equation}
        P(e_j|\Psi,Y)=q_jh_j(\Psi,Y)/\left(\sum_{i=1}^Kq_ih_i(\Psi,Y)\right)
    \end{equation}
\end{small}
In this way, $\mathcal{E}_{tr}$ is generated by $\mathcal{M}_c$.

\section{Theoretical Analysis}
\label{sec:theorem}
In this section, we theoretically analyze our proposed Heterogeneous Risk Minimization (HRM) method. 
We first analyze our proposed invariant learning module $\mathcal{M}_p$, and then justify the existence of the positive feedback in our HRM. 

{\bf Justification of $\mathcal{M}_p$}\quad We prove that given training environments $\mathcal{E}_{tr}$, our invariant prediction model $\mathcal{M}_p$ can learn the maximal invariant predictor $\Phi(X)$ with respect to the corresponding invariance set $\mathcal{I}_{\mathcal{E}_{tr}}$.

\begin{theorem}
    Given $\mathcal{E}_{tr}$, the learned $\Phi(X)=M\odot X$ is the maximal invariant predictor of $\mathcal{I}_{\mathcal{E}_{tr}}$.
\end{theorem}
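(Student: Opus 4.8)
\emph{Plan.} The goal is to certify that the minimizer $(\theta,\mu)$ of the objective in \ref{equ:backend_obj}, through its induced mask $M$ and predictor $\Phi(X)=M\odot X$, coincides with the maximal invariant predictor of $\mathcal{I}_{\mathcal{E}_{tr}}$ from Definition \ref{definition2:invariance set}. Because we work in the raw-feature setting $X=[\Phi^*,\Psi^*]^T$ with an (idealized binary) selection mask, every admissible predictor has the form $X_S=M\odot X$ for an index set $S\subseteq[d]$, so the MIP criterion reduces to choosing the set $S^\ast$ that (i) is invariant, $Y\perp\mathcal{E}_{tr}\mid X_S$, and (ii) among all invariant sets maximizes $I(Y;X_S)$. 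I would prove these two properties separately, matching them to the two terms of \ref{equ:backend_obj}.

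\textbf{Feasibility ($\Phi\in\mathcal{I}_{\mathcal{E}_{tr}}$).} First I would show that the gradient-variance regularizer $\mathrm{trace}(\mathrm{Var}_{\mathcal{E}_{tr}}(\nabla_\theta\mathcal{L}^e))$ vanishes at a global minimizer exactly when the selected set is invariant. For the ``if'' direction, if $X_S\in\mathcal{I}_{\mathcal{E}_{tr}}$ then by Definition \ref{definition2:invariance set} the conditional law $P^e(Y\mid X_S)$ is identical across $e$, so the population risk-minimizing parameter $\theta^\ast(S)$ is shared by every environment; at this common optimum each per-environment stationarity condition $\nabla_\theta\mathcal{L}^e=0$ holds simultaneously, the gradients coincide, and their variance vanishes. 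For the ``only if'' direction I would invoke the Heterogeneity Assumption \ref{assumption:heterogeneity}: if $S$ includes any variant coordinate of $\Psi^\ast$, then $P^e(Y\mid X_S)$ genuinely changes with $e$, no single $\theta$ is first-order optimal in every environment, and $\mathrm{Var}_{\mathcal{E}_{tr}}(\nabla_\theta\mathcal{L}^e)>0$. Hence, for $\lambda$ large enough, every global minimizer of \ref{equ:backend_obj} drives the penalty to zero and thereby selects only invariant features, giving $\Phi=M\odot X\in\mathcal{I}_{\mathcal{E}_{tr}}$.

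\textbf{Maximality.} Restricted to invariant masks the regularizer contributes nothing, so the minimizer is the invariant selection that minimizes the pooled risk $\mathbb{E}_{\mathcal{E}_{tr}}[\mathcal{L}^e]$ up to the sparsity term. Using a proper loss (log-loss, or squared error under the additive-noise model of Assumption \ref{assumption1: main}b), the optimal attainable pooled risk for a selection $S$ equals the conditional entropy $H[Y\mid X_S]$ up to an $S$-independent constant, so minimizing the risk over invariant sets is equivalent to maximizing $I(Y;X_S)=H[Y]-H[Y\mid X_S]$, which is exactly the MIP criterion of Definition \ref{definition2:invariance set}. I would then show the sparsity penalty $\alpha\sum_i\mathrm{CDF}(\mu_i/\sigma)$ only removes coordinates carrying no additional information about $Y$: dropping a genuinely informative invariant feature strictly increases $H[Y\mid X_S]$, hence the risk, by more than the sparsity saving for small $\alpha$, so the penalty breaks ties toward parsimony without sacrificing any informative invariant coordinate. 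Combining feasibility and maximality, the learned $\Phi$ lies in $\mathcal{I}_{\mathcal{E}_{tr}}$ and attains the maximal $I(Y;\Phi)$, i.e.\ it is the MIP.

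\textbf{Main obstacle.} The crux is the ``only if'' equivalence in the feasibility step: rigorously tying the information-theoretic invariance condition $H[Y\mid\Phi]=H[Y\mid\Phi,\mathcal{E}]$ to the zero-gradient-variance condition requires that equality of per-environment gradients at a shared stationary point be equivalent to equality of the environment-wise conditional laws $P^e(Y\mid X_S)$. This holds cleanly for linear predictors under a proper loss, where the stationary point pins down the conditional mean $\mathbb{E}^e[Y\mid X_S]$ and Assumption \ref{assumption1: main}b upgrades equality of means to equality of distributions; outside this regime one must control the gap between gradient alignment and distributional invariance, and also justify the soft-mask relaxation \ref{equ:stochastic gates} as a faithful surrogate for the idealized binary selection. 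I expect most of the technical effort to sit here, with the maximality step following from standard proper-scoring-rule identities.
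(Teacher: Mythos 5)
Your proposal is correct in outline and follows the same basic decomposition as the paper: the gradient-variance penalty is tied to invariance of the selected features, and the empirical-risk term is tied to informativeness, so that the minimizer of Equation \ref{equ:backend_obj} picks out the MIP. The difference is one of completeness, and it is worth noting. The paper's proof only establishes one direction: it verifies that when $\Phi(X)$ \emph{is} the MIP, the penalty vanishes (citing an external result that $I(Y;\mathcal{E}_{tr}|\Phi(X))=0$ forces $\mathrm{trace}(\mathrm{Var}_{\mathcal{E}_{tr}}(\nabla_\theta\mathcal{L}^e))=0$) and the risk term attains its minimum by sufficiency, hence the MIP attains the minimum of the objective. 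It does not argue the converse --- that no non-invariant or less informative selection attains the same objective value --- which is what the theorem as stated ("the learned $\Phi(X)$ \emph{is} the MIP") actually requires. You attempt exactly this converse: strict positivity of the penalty for any mask containing a variant coordinate (so that for $\lambda$ large enough every global minimizer is invariant), and the proper-scoring-rule identification of risk minimization over invariant masks with maximization of $I(Y;X_S)$. Two small cautions on your argument: (i) your ``only if'' step should be phrased relative to $\mathcal{I}_{\mathcal{E}_{tr}}$ rather than Assumption \ref{assumption:heterogeneity} --- the assumption says $P^e(Y|\Psi^*)$ \emph{can} vary over all of $\mathrm{supp}(\mathcal{E})$, not that it \emph{does} vary within the observed $\mathcal{E}_{tr}$, and a coordinate whose relationship happens to be stable across $\mathcal{E}_{tr}$ legitimately belongs to $\mathcal{I}_{\mathcal{E}_{tr}}$; (ii) your ``if'' step needs the model class to be well specified (which Assumption \ref{assumption1: main}b and the linear setting supply), since invariance of $P^e(Y|X_S)$ alone does not give a shared per-environment risk minimizer when the marginals $P^e(X_S)$ differ and the model is misspecified. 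With those caveats, your route is the same as the paper's but strictly more rigorous about what the theorem claims; the obstacles you flag (gradient alignment versus distributional invariance, and the soft-mask relaxation) are real and are not resolved in the paper's own proof either.
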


{\bf Justification of the Positive Feedback}\quad The core of our HRM framework is the mechanism for $\mathcal{M}_c$ and $\mathcal{M}_p$ to mutual promote each other.
Here we theoretically justify the existence of such positive feedback.
In Assumption \ref{assumption1: main}, we assume the invariance and sufficiency properties of the stable features $\Phi^*$ and assume the relationship between unstable part $\Psi^*$ and $Y$ can arbitrarily change. 
Here we make a more specific assumption on the heterogeneity across environments with respect to $\Phi^*$ and $\Psi^*$.
\begin{assumption}
    \label{assumption:v}
    Assume the pooled training data is made up of heterogeneous data sources: $P_{tr} = \sum_{e \in \mathrm{supp}(\mathcal{E}_{tr})} w_e P^e$. For any $e_i, e_j \in \mathcal{E}_{tr}, e_i\neq e_j$, we assume
    \begin{small}
        \begin{equation}
        \label{equ:v_assumption}
            I^c_{i,j}(Y;\Phi^*|\Psi^*) \geq \mathrm{max}(I_i(Y;\Phi^*|\Psi^*),I_j(Y;\Phi^*|\Psi^*))
        \end{equation}
    \end{small}
    where $\Phi^*$ is invariant feature and $\Psi^*$ the variant. $I_i$ represents mutual information in $P^{e_i}$ and $I^c_{i,j}$ represents the cross mutual information between $P^{e_i}$ and $P^{e_j}$ takes the form of $I^c_{i,j}(Y;\Phi|\Psi) = H^c_{i,j}[Y|\Psi] - H^c_{i,j}[Y|\Phi,\Psi]$ and $H^c_{i,j}[Y] = -\int p^{e_i}(y)\log p^{e_j}(y) dy$.
\end{assumption}
Here we would like to intuitively demonstrate this assumption.
Firstly, the mutual information $I_i(Y;\Phi^*)=H_i[Y]-H_i[Y|\Phi^*]$ can be viewed as the error reduction if we use $\Phi^*$ to predict $Y$ rather than predict by nothing.
Then the cross mutual information $I_{i,j}(Y;\Phi^*)$ can be viewed as the error reduction if we use the predictor learned on $\Phi^*$ in environment $e_j$ to predict in environment $e_i$, rather than predict by nothing.
Therefore, the R.H.S in equation \ref{equ:v_assumption} measures that, in environment $e_i$, how much prediction error can be reduced if we further add $\Phi^*$ for prediction rather than use only $\Psi^*$.
And the L.H.S measures that, when using predictors trained in $e_i$ to predict in $e_j$, how much prediction error can be reduced if we further add $\Phi^*$ for prediction rather than use only $\Psi^*$.
Intuitively, Assumption \ref{assumption:v} assumes that invariant feature $\Phi^*$ provides more information for predicting $Y$ across environments than in one single environment, and correspondingly, the information provided by $\Psi^*$ shrinks a lot across environments, which indicates that the relationship between variant feature $\Psi^*$ and $Y$ varies across environments.
Based on this assumption, we first prove that the cluster centres are pulled apart as invariant feature is excluded from clustering.
\begin{theorem}
    \label{theorem:kl}
    For $e_i, e_j \in \mathrm{supp}(\mathcal{E}_{tr})$, assume that $X = [\Phi^*, \Psi^*]^T$ satisfying Assumption \ref{assumption1: main}, where $\Phi^*$ is invariant and $\Psi^*$ variant.
    Then under Assumption \ref{assumption:v}, we have $
        \mathrm{D_{KL}}(P^{e_i}(Y|X)\|P^{e_j}(Y|X)) \leq \mathrm{D_{KL}}(P^{e_i}(Y|\Psi^*)\|P^{e_j}(Y|\Psi^*))$
\end{theorem}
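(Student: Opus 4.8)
The plan is to reduce the claimed inequality directly to the hypothesis of Assumption \ref{assumption:v} by rewriting each conditional KL divergence as the gap between a cross conditional entropy and a conditional entropy, both taken under the $P^{e_i}$ marginals. First I would use the structural assumption $X = [\Phi^*,\Psi^*]^T$ to replace conditioning on $X$ with joint conditioning on $(\Phi^*,\Psi^*)$, so that the left-hand side becomes $\mathrm{D_{KL}}(P^{e_i}(Y|\Phi^*,\Psi^*)\|P^{e_j}(Y|\Phi^*,\Psi^*))$, and the whole statement is phrased purely in terms of $\Phi^*$ and $\Psi^*$.

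The key observation is that for any conditioning variable $Z$ the pointwise identity $\int p^{e_i}(y|z)\log\frac{p^{e_i}(y|z)}{p^{e_j}(y|z)}\,dy = H^c_{i,j}[Y|Z=z] - H_i[Y|Z=z]$ holds, and averaging over $z\sim P^{e_i}$ yields
\[
\mathrm{D_{KL}}(P^{e_i}(Y|Z)\|P^{e_j}(Y|Z)) = H^c_{i,j}[Y|Z] - H_i[Y|Z],
\]
where $H^c_{i,j}$ is the cross conditional entropy of Assumption \ref{assumption:v} and $H_i$ is the conditional entropy under $P^{e_i}$. Applying this once with $Z=(\Phi^*,\Psi^*)$ and once with $Z=\Psi^*$, the target inequality rearranges into
\[
\big(H^c_{i,j}[Y|\Psi^*]-H^c_{i,j}[Y|\Phi^*,\Psi^*]\big)\ \geq\ \big(H_i[Y|\Psi^*]-H_i[Y|\Phi^*,\Psi^*]\big).
\]
By the definitions given in Assumption \ref{assumption:v}, the left-hand side is exactly $I^c_{i,j}(Y;\Phi^*|\Psi^*)$ and the right-hand side is the conditional mutual information $I_i(Y;\Phi^*|\Psi^*)$, so the theorem is equivalent to $I^c_{i,j}(Y;\Phi^*|\Psi^*)\geq I_i(Y;\Phi^*|\Psi^*)$.

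It then remains only to invoke Assumption \ref{assumption:v}, which gives $I^c_{i,j}(Y;\Phi^*|\Psi^*)\geq \max\big(I_i(Y;\Phi^*|\Psi^*),I_j(Y;\Phi^*|\Psi^*)\big)\geq I_i(Y;\Phi^*|\Psi^*)$; note that only the $I_i$ branch of the maximum is actually needed for this direction. The main obstacle I anticipate is the bookkeeping of the averaging measures: the cross-entropy/entropy decomposition is exact only if every term uses $P^{e_i}$ as its outer marginal, and the $\Psi^*$-marginal appearing in the $Z=\Psi^*$ terms must coincide with the one obtained by marginalizing out $\Phi^*$ from the $(\Phi^*,\Psi^*)$ terms. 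Verifying this consistency, so that the two decompositions telescope into precisely the conditional mutual informations as defined in Assumption \ref{assumption:v}, is the one place requiring care; the setup of Assumption \ref{assumption1: main} guarantees the conditional laws in question are well defined, after which the remaining manipulations are routine.
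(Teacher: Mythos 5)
Your proposal is correct and follows essentially the same route as the paper: both expand the two conditional KL divergences against the $P^{e_i}$ joint, observe that their difference telescopes to $I^c_{i,j}(Y;\Phi^*|\Psi^*) - I_i(Y;\Phi^*|\Psi^*)$, and conclude by the $I_i$ branch of Assumption \ref{assumption:v}. The only cosmetic difference is that you package the integrand via the cross-entropy/entropy identity for each $Z$ separately, whereas the paper regroups the log-ratios by environment inside a single integral; the algebra is identical.
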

Theorem \ref{theorem:kl} indicates that the distance between cluster centres is larger when using variant features $\Psi^*$ , therefore, it is more likely to obtain the desired heterogeneous environments, which explains why we use learned variant part $\Psi(X)$ for clustering.
Finally, we provide the theorem for optimality guarantee for our HRM. 
\begin{theorem}
    \label{theorem:optimal}
    Under Assumption \ref{assumption1: main} and \ref{assumption:v}, for the proposed $\mathcal{M}_c$ and $\mathcal{M}_p$, we have the following conclusions:
    1. Given environments $\mathcal{E}_{tr}$ such that $\mathcal{I}_{\mathcal{E}}=\mathcal{I}_{\mathcal{E}_{tr}}$, the learned $\Phi(X)$ by $\mathcal{M}_p$ is the maximal invariant predictor of $\mathcal{I}_{\mathcal{E}}$.
  
    2. Given the maximal invariant predictor $\Phi^*$ of $\mathcal{I}_{\mathcal{E}}$, assume the pooled training data is made up of data from all environments in $\mathrm{supp}(\mathcal{E})$, there exist one split that achieves the minimum of the objective function and meanwhile the invariance set regularized is equal to $\mathcal{I}_{\mathcal{E}}$.
\end{theorem}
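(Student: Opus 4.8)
The plan is to handle the two parts separately, since the first is a short corollary of the module-level guarantee for $\mathcal{M}_p$ while the second requires a constructive argument about the optimum of the clustering objective. For the first part, I would invoke the preceding theorem stating that for any fixed $\mathcal{E}_{tr}$ the predictor $\Phi(X)=M\odot X$ returned by $\mathcal{M}_p$ is the maximal invariant predictor of $\mathcal{I}_{\mathcal{E}_{tr}}$. Combined with the hypothesis $\mathcal{I}_{\mathcal{E}}=\mathcal{I}_{\mathcal{E}_{tr}}$, substitution immediately gives that $\Phi(X)$ is the maximal invariant predictor of $\mathcal{I}_{\mathcal{E}}$, so no additional work is needed here.

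For the second part I would exhibit an explicit split and verify both that it minimizes $\mathcal{L}_c$ and that its invariance set is $\mathcal{I}_{\mathcal{E}}$. Writing the population version of the objective as $D_{KL}$ between the pooled conditional of $(Y\mid\Psi^*)$ and a mixture $Q=\sum_j q_j h_j$, which is nonnegative, I would take $K=|\mathrm{supp}(\mathcal{E})|$ and, for each true environment $e_j$, choose the center $f_{\Theta_j}$ so that $h_j$ realizes $P^{e_j}(Y\mid\Psi^*)$ together with weight $q_j=w_{e_j}$. Since the pooled data is the mixture $P_{tr}=\sum_e w_e P^e$ conditioned on the genuine variant features $\Psi^*$, this choice reconstructs the pooled conditional and drives the divergence to its lower bound, so the global minimum is attained by this split. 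I would reconcile the constant weights $q_j$ with the $\Psi^*$-dependent posterior weights of the pooled conditional by working entirely at the level of the conditional model class and invoking the heterogeneity assumption, which guarantees that the component conditionals are precisely the objects being separated.

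It then remains to identify the invariance set of this minimizing split. By Assumption \ref{assumption:v} and Theorem \ref{theorem:kl}, the conditionals $P^{e}(Y\mid\Psi^*)$ are pulled apart across environments, so the posterior assignment rule $P(e_j\mid\Psi,Y)$ recovers exactly the partition into the true environments $\{e_j\}_{j\in[K]}$; since the pooled data is assumed to contain all of $\mathrm{supp}(\mathcal{E})$, the learned $\mathcal{E}_{tr}$ coincides with $\mathcal{E}$, whence $\mathcal{I}_{\mathcal{E}_{tr}}=\mathcal{I}_{\mathcal{E}}$ follows directly from the definition of the invariance set, with Theorem \ref{theorem:useless env} confirming that no variant feature survives inside $\mathcal{I}_{\mathcal{E}_{tr}}$.

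The main obstacle I expect is the identifiability step inside the second part: showing that an optimizer of $\mathcal{L}_c$ genuinely separates the environments rather than collapsing or remixing them, given that mixture decompositions are in general non-unique. I would lean on the existential phrasing of the statement, since only one optimal split is required, and use the heterogeneity assumption (via Theorem \ref{theorem:kl}) to certify that the constructed environment-wise split is simultaneously a global minimizer of $\mathcal{L}_c$ and environment-separating, so that both conjuncts of the claim are met by the same split.
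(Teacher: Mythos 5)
Your proposal follows essentially the same route as the paper's proof: part 1 is the same direct substitution into the $\mathcal{M}_p$ guarantee (Theorem 4.1), and part 2 constructs the same witness $Q^*=\sum_{e}w_e P^e(Y\mid\Psi^*)$ and argues it is a global minimizer of the KL objective whose induced split recovers the true environments, hence $\mathcal{I}_{\mathcal{E}_{tr}}=\mathcal{I}_{\mathcal{E}}$. If anything, you are more explicit than the paper about the two weak points --- the mismatch between constant mixture weights and the $\Psi^*$-dependent posterior weights of the pooled conditional, and the identifiability of the minimizing split --- both of which the paper's own proof asserts without resolving.
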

Intuitively, Theorem \ref{theorem:optimal} proves that given one of the $\mathcal{M}_c$ and $\mathcal{M}_p$ optimal, the other is optimal, which validates the existence of the global optimal point of our algorithm.


\section{Experiment}
In this section, we validate the effectiveness of our method on simulation data and real-world data.

{\bf Baselines}\quad
 We compare our proposed HRM with the following methods:
 \begin{small}
     \begin{itemize}
     \item Empirical Risk Minimization(ERM): $\min_\theta \mathbb{E}_{P_0}[\ell(\theta;X,Y)]$
     \item Distributionally Robust Optimization(DRO \cite{SinhaCertifying}):\quad $\min_\theta \sup_{Q\in W(Q,P_0)\leq \rho}\mathbb{E}_Q[\ell(\theta;X,Y)]$
     \item Environment Inference for Invariant Learning(EIIL \cite{creager2020environment}):
         \begin{small}
         \begin{equation}
             \begin{aligned}
                     \min_\Phi \max_u &\sum_{e\in\mathcal{E}}\frac{1}{N_e}\sum_i u_i(e)\ell(w\odot\Phi(x_i),y_i) + \\
                     &\sum_{e\in\mathcal{E}}\lambda \|\nabla_{w|w=1.0}\frac{1}{N_e}\sum_iu_i(e)\ell(w\odot\Phi(x_i),y_i)\|_2
             \end{aligned}
         \end{equation}
         \end{small}
     \item Invariant Risk Minimization(IRM \cite{arjovsky2019invariant}) with environment $\mathcal{E}_{tr}$ labels:
         \begin{small}
         \begin{equation}
             \min_\Phi \sum_{e\in\mathcal{E}_{tr}}\mathcal{L}^e + \lambda \|\nabla_{w|w=1.0}\mathcal{L}^e(w\odot \Phi)\|^2
         \end{equation}
         \end{small}
 \end{itemize}
 \end{small}
Further, for ablation study, we also compare with HRM$^s$, which runs HRM for only one iteration without the feedback loop.
Note that IRM is based on multiple training environments and we provide environment $\mathcal{E}_{tr}$ labels for IRM, while others do not need environment labels.

{\bf Evaluation Metrics}\quad
To evaluate the prediction performance, we use $\mathrm{Mean\_Error}$ defined as $\mathrm{Mean\_Error} = \frac{1}{|\mathcal{E}_{test}|} \sum_{e\in \mathcal{E}_{test}} \mathcal{L}^e$, $\mathrm{Std\_Error}$ defined as $\mathrm{Std\_Error} = \sqrt{\frac{1}{|\mathcal{E}_{test}|-1}\sum_{e\in \mathcal{E}_{test}}(\mathcal{L}^e-\mathrm{Mean\_Error})^2}$, which are mean and standard deviation error across $\mathcal{E}_{test}$ and $\mathrm{Max\_Error}=\max_{e\in\mathcal{E}_{test}}\mathcal{L}^e$, which are mean error, standard deviation error and worst-case error across $\mathcal{E}_{test}$.

{\bf Imbalanced Mixture}\quad
It is a natural phenomena that empirical data follow a power-law distribution, i.e. only a few environments/subgroups are common and the rest are rare \cite{2018Causally, sagawa2019distributionally, 2020An}.
Therefore, we perform non-uniform sampling among different environments in training set.

\begin{table*}[t]
	\centering
	\caption{Results in selection bias simulation experiments of different methods with varying selection bias $r$, and dimensions $n_{b}$ and $d$ of training data, and each result is averaged over ten times runs.}
	\label{tab:sim_selection}
	\vskip 0.05in
	
	\resizebox{0.8\textwidth}{28mm}{
	\begin{tabular}{|l|c|c|c|c|c|c|c|c|c|}
		\hline
		\multicolumn{10}{|c|}{\textbf{Scenario 1: varying selection bias rate $r$\quad($d=10,n_b=1$)}}\\
		\hline
		$r$&\multicolumn{3}{|c|}{$r=1.5$}&\multicolumn{3}{|c|}{$r=1.9$}&\multicolumn{3}{|c|}{$r=2.3$}\\
		\hline
		Methods &  $\mathrm{Mean\_Error}$ & $\mathrm{Std\_Error}$& $\mathrm{Max\_Error}$ &$\mathrm{Mean\_Error}$ & $\mathrm{Std\_Error}$& $\mathrm{Max\_Error}$ &  $\mathrm{Mean\_Error}$ & $\mathrm{Std\_Error}$ & $\mathrm{Max\_Error}$ \\
		\hline 
		ERM & 0.476 & 0.064 & 0.524&0.510 & 0.108& 0.608  & 0.532 & 0.139& 0.690  \\
		DRO& 0.467 & 0.046& 0.516 & 0.512 & 0.111& 0.625  & 0.535 & 0.143& 0.746 \\
		EIIL & 0.477 & 0.057& 0.543 & 0.507 & 0.102& 0.613 & 0.540 & 0.139& 0.683 \\
		\hline
		IRM(with $\mathcal{E}_{tr}$ label) & 0.460 & 0.014& 0.475 & 0.456 & 0.015& 0.472 & 0.461 & 0.015& 0.475\\
		\hline
		HRM$^s$ & 0.465 & 0.045& 0.511 & 0.488 & 0.078& 0.577 & 0.506 & 0.096& 0.596 \\
		HRM &\bf 0.447 &\bf 0.011& \bf 0.462 &\bf  0.449 &\bf 0.010& \bf 0.465  &\bf 0.447 &\bf 0.011& \bf 0.463 \\
		\hline
		\multicolumn{10}{|c|}{\textbf{Scenario 2: varying dimension $d$\quad($r = 1.9, n_{b}=0.1d$)}}\\
		\hline
		$d$&\multicolumn{3}{|c|}{$d=10$}&\multicolumn{3}{|c|}{$d=20$}&\multicolumn{3}{|c|}{$d=40$}\\
		\hline
		Methods &  $\mathrm{Mean\_Error}$ & $\mathrm{Std\_Error}$& $\mathrm{Max\_Error}$ & $\mathrm{Mean\_Error}$ & $\mathrm{Std\_Error}$& $\mathrm{Max\_Error}$ &  $\mathrm{Mean\_Error}$ & $\mathrm{Std\_Error}$ & $\mathrm{Max\_Error}$ \\
		\hline %
		ERM & 0.510 & 0.108& 0.608 & 0.533 & 0.141& 0.733  & 0.528 & 0.175& 0.719 \\ 
		DRO & 0.512 & 0.111& 0.625 & 0.564 & 0.186& 0.746 & 0.555 & 0.196& 0.758 \\
		EIIL & 0.507 & 0.102& 0.613 & 0.543 & 0.147& 0.699 & 0.542	& 0.178& 0.727\\
		\hline
		IRM(with $\mathcal{E}_{tr}$ label) & 0.456 & 0.015& 0.472 & 0.484 & 0.014& 0.489 & 0.500 & 0.051& 0.540\\
		\hline
		HRM$^s$ & 0.488 & 0.078& 0.577 & 0.486 & 0.069& 0.555 & 0.477 & 0.081& 0.553 \\
		HRM &\bf 0.449 & \bf 0.010& \bf 0.465 & \bf 0.466 & \bf 0.011& \bf0.478 & \bf 0.465 & \bf 0.015& \bf 0.482\\
		\hline
	\end{tabular}
	}
\end{table*}

\subsection{Simulation Data}
We design two mechanisms to simulate the varying correlations among covariates across environments, named by selection bias and anti-causal effect.

\begin{table*}[htbp]
	\centering
	\caption{Prediction errors of the anti-causal effect experiment. We design two settings with different dimensions of $\Phi^*$ and $\Psi^*$ as $n_\phi$ and $n_\psi$ respectively. The results are averaged over 10 runs.}
	\label{tab:anti-causal}
	\vskip 0.05in
	
	\resizebox{0.75\textwidth}{30mm}{
	\begin{tabular}{|l|c|c|c|c|c|c|c|c|c|c|}
		\hline
		\multicolumn{11}{|c|}{\textbf{Scenario 1: $n_\phi=9,\ n_\psi=1$}}\\
		\hline
		$e$&\multicolumn{3}{|c|}{Training environments}&\multicolumn{7}{|c|}{Testing environments}\\
		\hline
		Methods &  $e_1$ & $e_2$ &$e_3$ & $e_4$ &  $e_5$ & $e_6$ &$e_7$ & $e_8$  & $e_9$ & $e_{10}$  \\
		\hline 
		ERM & 0.290 & 0.308 & 0.376 & 0.419 & 0.478 & 0.538& 0.596 & 0.626 & 0.640 & 0.689  \\
		DRO &0.289 & 0.310 & 0.388 & 0.428 & 0.517 & 0.610 & 0.627 & 0.669 & 0.679 & 0.739 \\
		EIIL & \bf 0.075 &\bf 0.128 & 0.349 & 0.485 & 0.795 & 1.162 & 1.286 & 1.527 & 1.558 & 1.884 \\
		\hline
		IRM(with $\mathcal{E}_{tr}$ label) & 0.306 & 0.312 & 0.325  & 0.328 & 0.343 & 0.358 & 0.365& 0.374 & 0.377 & 0.392\\
		\hline
		HRM$^s$& 1.060 & 1.085 & 1.112  & 1.130 &  1.207 & 1.280 & 1.325& 1.340 &  1.371 & 
        1.430 \\
		HRM &0.317 & 0.314 &\bf 0.322 &\bf  0.318 &\bf 0.321 & \bf 0.317 &\bf 0.315 &\bf 0.315 &\bf 0.316 &\bf 0.320 \\
		\hline
		
		\multicolumn{11}{|c|}{\textbf{Scenario 2: $n_\phi=5,\ n_\psi=5$}}\\
		\hline
		$e$&\multicolumn{3}{|c|}{Training environments}&\multicolumn{7}{|c|}{Testing environments}\\
		\hline
		Methods &  $e_1$ & $e_2$ &$e_3$ & $e_4$ &  $e_5$ & $e_6$ &$e_7$ & $e_8$  & $e_9$ & $e_{10}$  \\
		\hline 
		ERM & 0.238 &  0.286 & 0.433 & 0.512 & 0.629 & 0.727 &
  0.818 &  0.860 &  0.895 &  0.980  \\
        DRO &0.237 & 0.294 & 0.452 & 0.529 & 0.651 & 0.778 &
  0.859 &  0.911 &0.950 & 1.028 \\  
		EIIL & \bf 0.043 &\bf  0.145 & 0.521& 0.828 & 1.237 & 1.971 &  2.523 & 2.514 &  2.506 & 3.512 \\
		\hline
        IRM(with $\mathcal{E}_{tr}$ label) &0.287 & 0.293 & 0.329 & 0.345 & 0.382 & 0.420 & 0.444 &0.461 & 0.478 & 0.504\\
		\hline
		HRM$^s$&0.455 & 0.463 & 0.479& 0.478 & 0.495 &  0.508 &  0.513 &  0.519  & 0.525 & 
        0.533 \\
		HRM &0.316 & 0.315 &\bf 0.315 &\bf 0.330 &\bf 0.3200 &\bf 0.317 &\bf
  0.326 &\bf  0.330 &\bf 0.333 &\bf  0.335 \\
		\hline
	\end{tabular}
	}
\end{table*}

{\bf Selection Bias}\quad 
In this setting, the correlations between variant covariates and the target are perturbed through selection bias mechanism. 
According to Assumption \ref{assumption1: main}, we assume $X = [\Phi^*,\Psi^*]^T \in \mathbb{R}^d$ and $Y = f(\Phi^*) + \epsilon$ and that $P(Y|\Phi^*)$ remains invariant across environments while $P(Y|\Psi^*)$ changes arbitrarily. 
For simplicity, we select data points according to a certain variable set $V_b \subset \Psi^*$:
\begin{small}
\begin{equation}
\hat{P}(x) = \prod_{v_i \in V_b}|r|^{-5*|f(\phi^*) - sign(r)*v_i|}
\end{equation}  
\end{small}
where $|r| > 1$, $V_b \in \mathbb{R}^{n_b}$ and $\hat{P}(x)$ denotes the probability of point $x$ to be selected.
Intuitively, $r$ eventually controls the strengths and direction of the spurious correlation between $V_b$ and $Y$(i.e. if $r>0$, a data point whose $V_b$ is close to its $y$ is more probably to be selected.).
The larger value of $|r|$ means the stronger spurious correlation between $V_b$ and $Y$, and $r \ge 0$ means positive correlation and vice versa. 
Therefore, here we use $r$ to define different environments.

In training, we generate $sum=2000$ data points, where $\kappa=95\%$ points from environment $e_1$ with a predefined $r$ and $1-\kappa=5\%$ points from $e_2$ with $r=-1.1$. In testing, we generate data points for 10 environments with $r \in [-3, -2.7, -2.3, \dots ,2.3,2.7,3.0]$. $\beta$ is set to 1.0. 
We compare our HRM with ERM, DRO, EIIL and IRM for Linear Regression. 
We conduct extensive experiments with different settings on $r$, $n_b$ and $d$. 
In each setting, we carry out the procedure 10 times and report the average results. 
The results are shown in Table \ref{tab:sim_selection}. 

From the results, we have the following observations and analysis: {\bf ERM} suffers from the distributional shifts in testing and yields poor performance in most of the settings.
{\bf DRO} surprisingly has the worst performance, which we think is due to the over-pessimism problem \cite{frogner2019incorporating}.
{\bf EIIL} has the similar performance with ERM, which indicates that its inferred environments cannot reveal the spurious correlations between $Y$ and $V_b$.
{\bf IRM} performs much better than the above two baselines, however, as IRM depends on the available environment labels to work, it uses much more information than the other three methods.
Compared to the three baselines, our {\bf HRM} achieves nearly perfect performance with respect to average performance and stability, especially the variance of losses across environments is close to 0, which reflects the effectiveness of our heterogeneous clustering as well as the invariant learning algorithm.
Furthermore, our HRM does not need environment labels, which verifies that our clustering algorithm can mine the latent heterogeneity inside the data and further shows our superiority to IRM.

Besides, we visualize the differences between environments using Task2Vec \cite{task2vec} in Figure \ref{fig:task2vec}, where larger value means the two environments are more heterogeneous.
The pooled training data are mixture of environments with $r=1.9$ and $r=-1.1$, the difference between whom is shown in yellow box.
And the red boxes show differences between learned environments by HRM$^s$ and HRM.
The big promotion between $\mathcal{E}_{init}$ and $\mathcal{E}_{learn}$ verifies our HRM can exploit heterogeneity inside data as well as the existence of the positive feedback.
Due to space limitation, results of varying $sum,\kappa, n_b$ as well as experimental details are left to appendix.

\begin{figure}[]
    \includegraphics[width=\linewidth]{./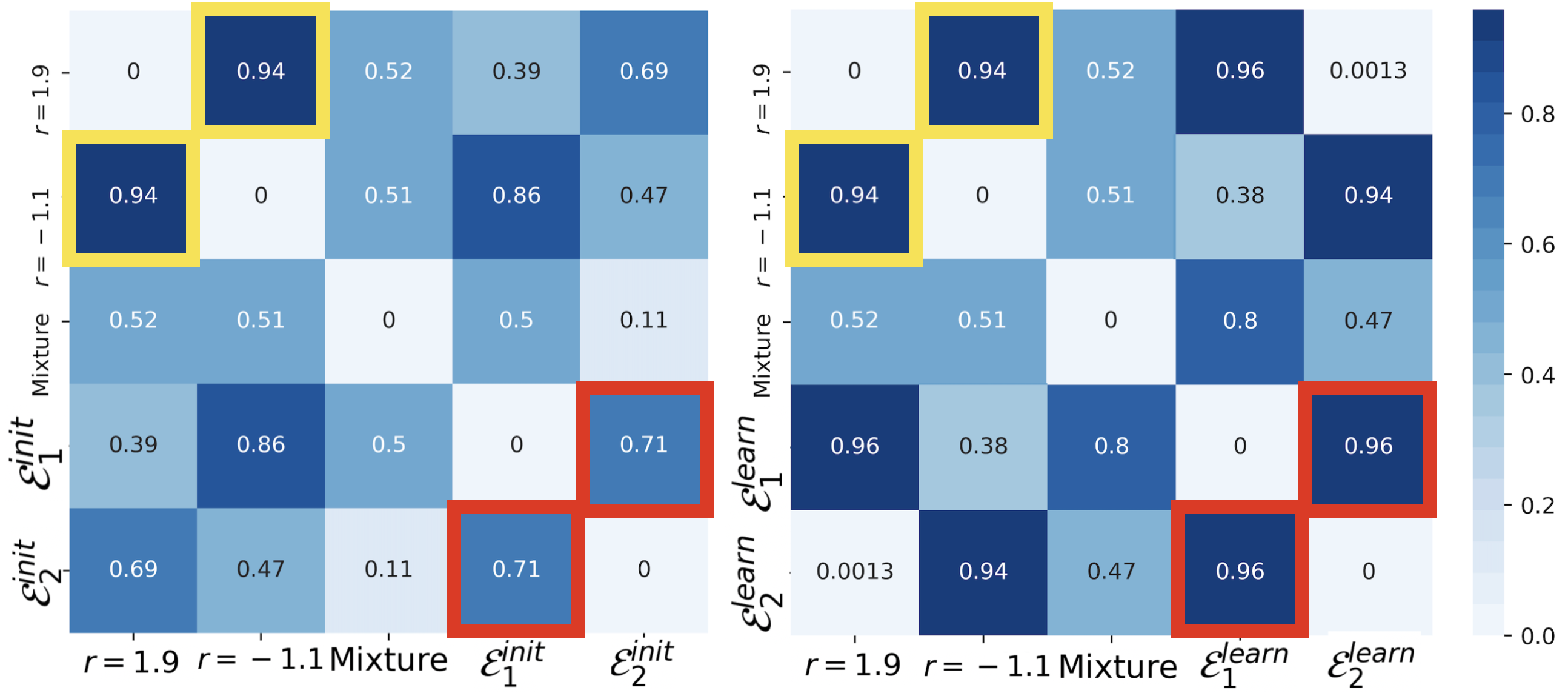}
    \vskip -0.1in
    \caption{Visualization of differences between environments in scenario 1 in selection bias experiment($r=1.9$). The left figure shows the initial clustering results using $X$, and the right one shows the learned $\mathcal{E}^{learn}$ using the learned variant part $\Psi(X)$.}
    \label{fig:task2vec}
\end{figure}
\vskip -0.05in

{\bf Anti-causal Effect}\quad 
Inspired by \cite{arjovsky2019invariant}, we induce the spurious correlation by using anti-causal relationship from the target $Y$ to the variant covariates $\Psi^*$.
In this experiment, we assume $X=[\Phi^*,\Psi^*]^T \in \mathbb{R}^d$ and firstly sample $\Phi^*$ from mixture Gaussian distribution characterized as $\sum_{i=1}^k z_k \mathcal{N}(\mu_i,I)$ and the target $Y = \theta_\phi^T\Phi^* + \beta \Phi_1\Phi_2\Phi_3+\mathcal{N}(0,0.3)$.
Then the spurious correlations between $\Psi^*$ and $Y$ are generated by anti-causal effect as 
\begin{small}
    \begin{equation}
        \Psi^* = \theta_\psi Y + \mathcal{N}(0,\sigma(\mu_i)^2)
    \end{equation}
\end{small}
where $\sigma(\mu_i)$ means the Gaussian noise added to $\Psi^*$ depends on which component the invariant covariates $\Phi^*$ belong to. 
Intuitively, in different Gaussian components, the corresponding correlations between $\Psi^*$ and $Y$ are varying due to the different value of $\sigma(\mu_i)$. 
The larger the $\sigma(\mu_i)$ is, the weaker correlation between $\Psi^*$ and $Y$. 
We use the mixture weight $Z=[z_1,\dots,z_k]^T$ to define different environments, where different mixture weights represent different overall strength of the effect $Y$ on $\Psi^*$.

In this experiment, we set $\beta=0.1$ and build 10 environments with varying $\sigma$ and the dimension of $\Phi^*,\Psi^*$, the first three for training and the last seven for testing. 
We run experiments for 10 times and the averaged results are shown in Table \ref{tab:anti-causal}.
{\bf EIIL} achieves the best training performance with respect to prediction errors on training environments $e_1$, $e_2$, $e_3$, while its performances in testing are poor.
{\bf ERM} suffers from distributional shifts in testing.
{\bf DRO} seeks for over-considered robustness and performs much worse.
{\bf IRM} performs much better as it learns invariant representations with help of environment labels.
{\bf HRM} achieves nearly uniformly good performance in training environments as well as the testing ones, which validates the effectiveness of our method and proves its excellent generalization ability.

\begin{figure*}[ht]
\vskip -0.1in
\subfigure[Training and testing accuracy for the car insurance prediction. Left sub-figure shows the training results for 5 settings and the right shows their corresponding testing results.]{
\label{img:car renting}
\includegraphics[width=0.32\linewidth]{./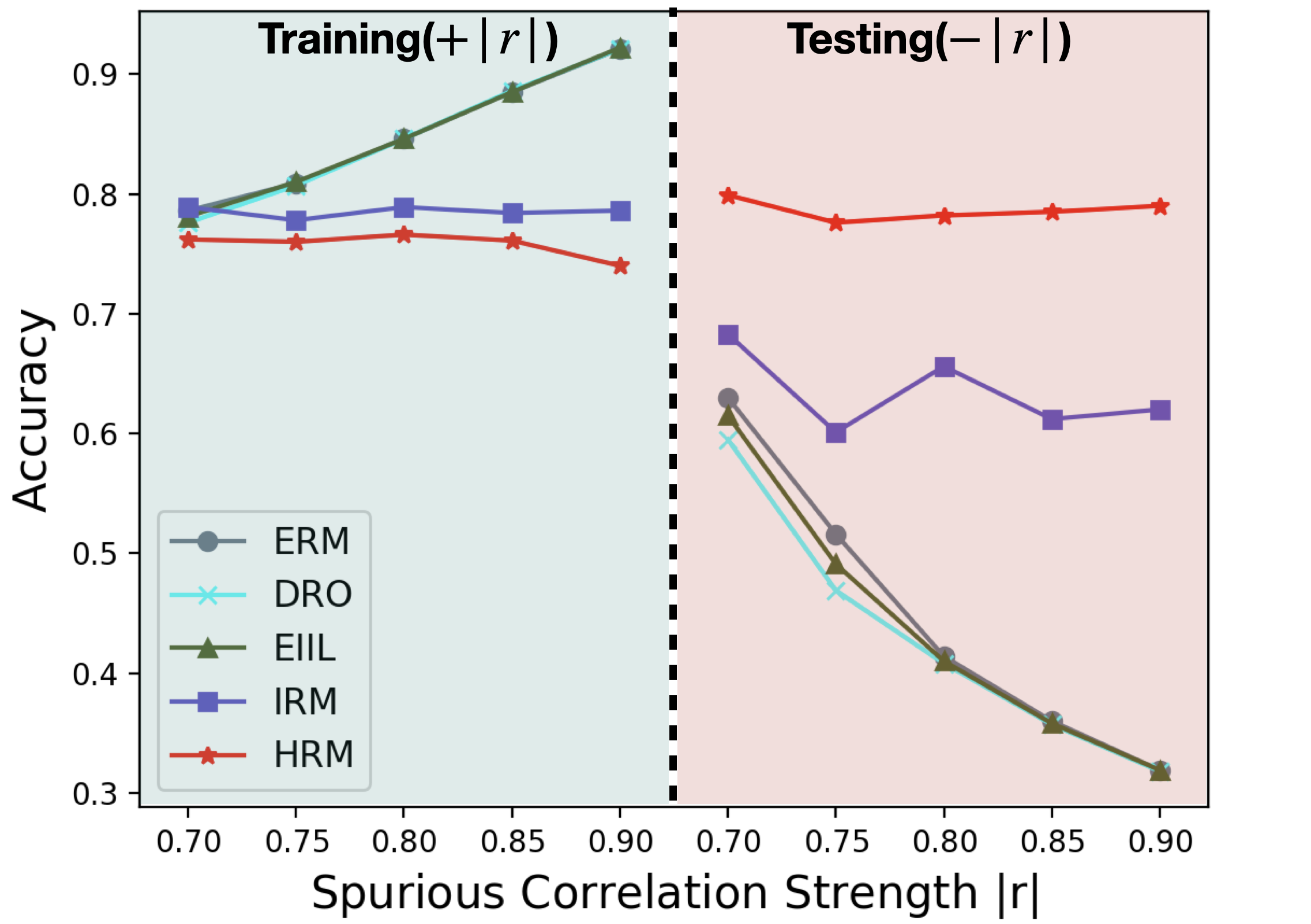}}
\subfigure[Mis-Classification Rate for the income prediction.]{
\label{img:income}
\includegraphics[width=0.32\linewidth]{./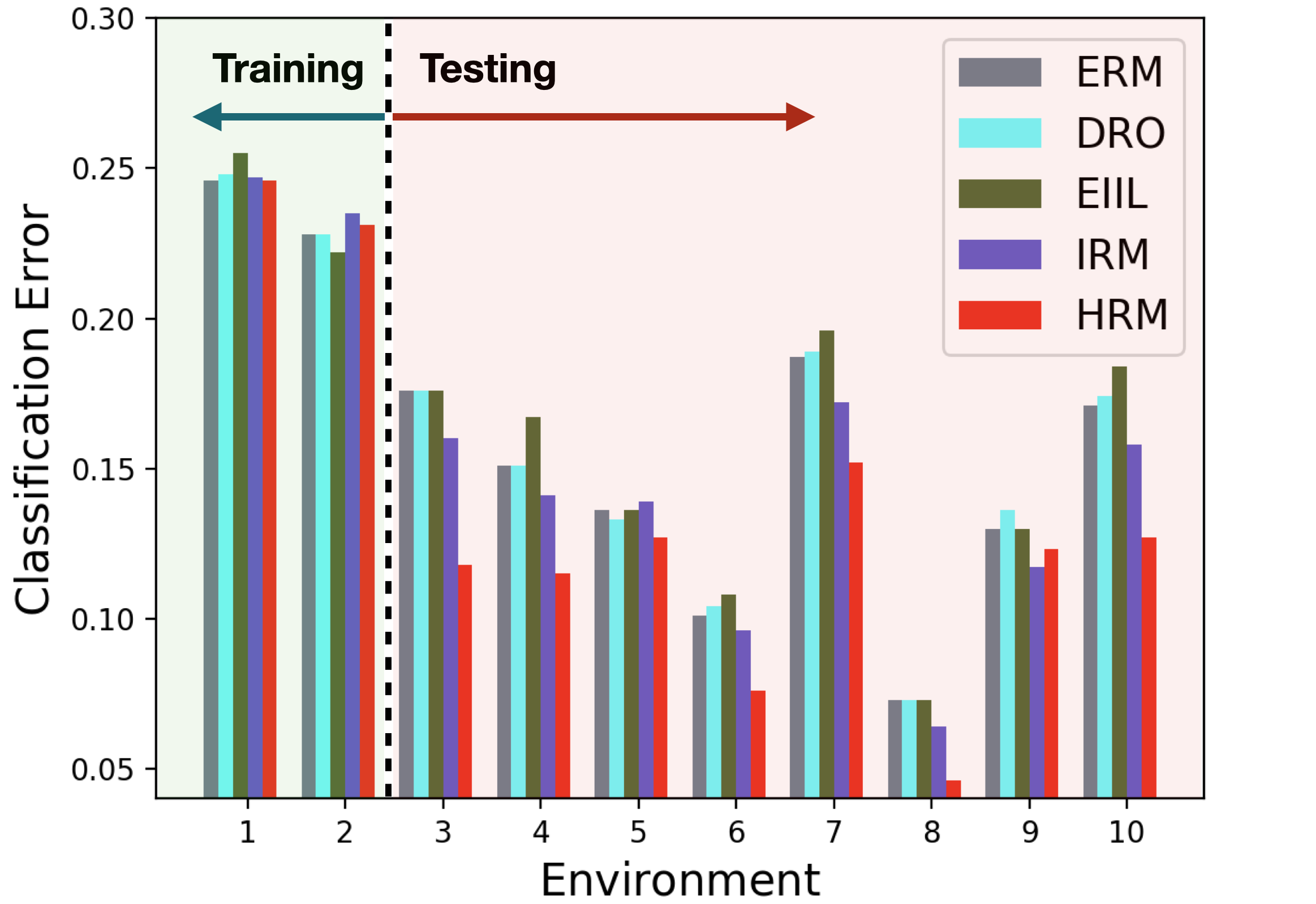}}
\subfigure[Prediction error for the house price prediction. $\mathrm{RMSE}$ refers to the Root Mean Square Error.]{
\label{img:house price}
\includegraphics[width=0.32\linewidth]{./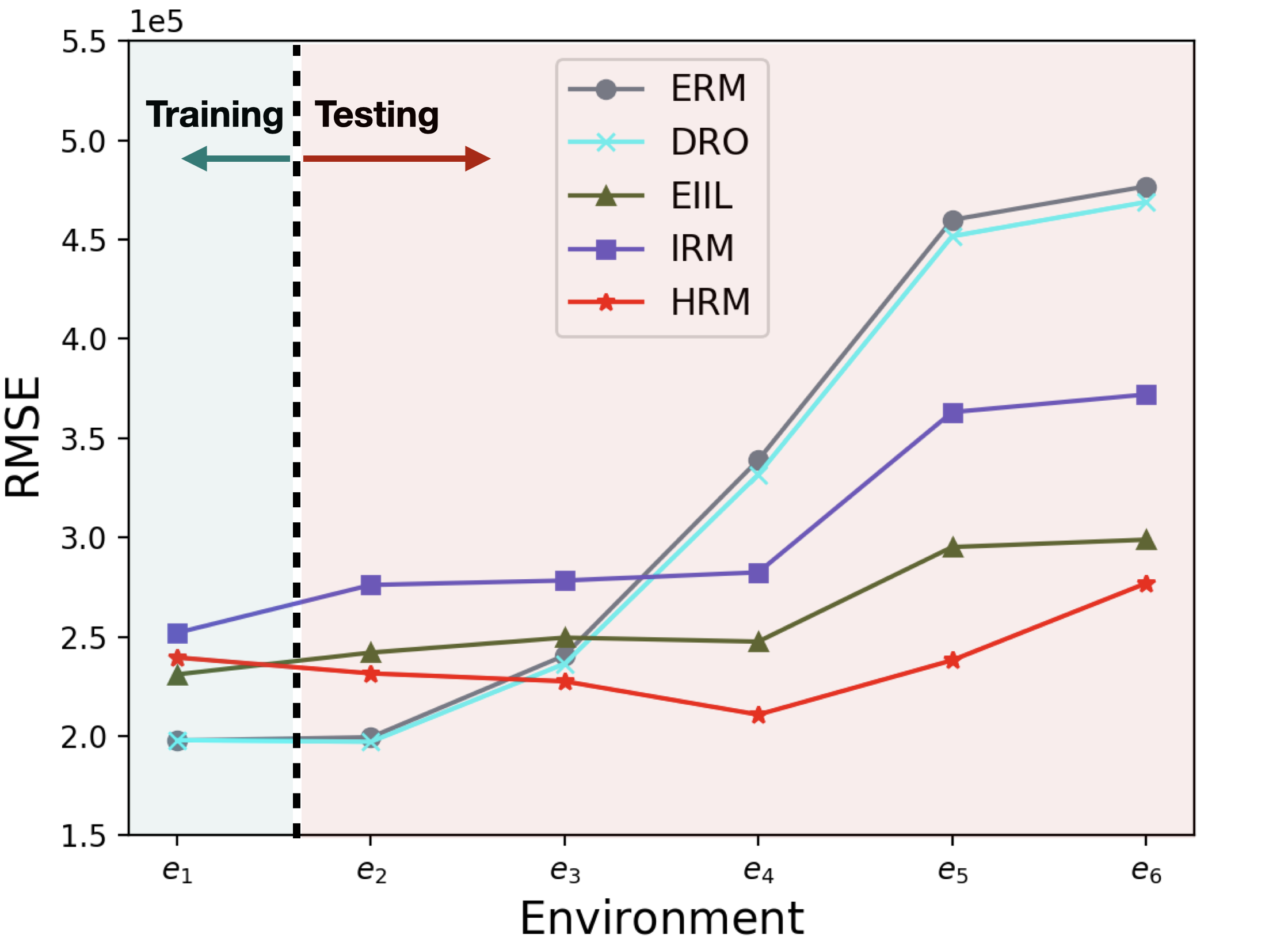}}

\caption{Results of real-world datasets, including training and testing performance for five methods.}
\vskip -0.1in
\end{figure*}

\subsection{Real-world Data}
We test our method on three real-world tasks, including car insurance prediction, people income prediction and house price prediction.
\subsubsection{Settings}
{\bf Car Insurance Prediction}\quad 
In this task, we use a real-world dataset for car insurance prediction (Kaggle). It is a classification task to predict whether a person will buy car insurance based on related information, such as vehicle damage, annual premium, vehicle age etc\footnote{https://www.kaggle.com/anmolkumar/health-insurance-cross-sell-prediction}.
We impose selection bias mechanism on the correlation between the outcome (i.e. the label indicating whether buying insurance) and the sex attribute to simulate multiple environments.
Specifically, we simulate different strengths $|r|$ of the spurious correlation between sex and target in training, and reverse the direction of such correlation in testing($+|r|$ in training and $-|r|$ in testing).
For IRM, in each setting, we divide the training data into three training environments with $r_1=0.95,r_2=0.9,r_3=-0.8$, and different overall correlation $r$ corresponds to different numbers of data in $e_1,e_2,e_3$.
We perform 5 experiments with varying $r$ and the results in both training and testing are shown in Figure \ref{img:car renting}.

{\bf People Income Prediction}\quad 
In this task we use the Adult dataset \cite{Dua:2019} to predict personal income levels as above or below \$50,000 per year based on personal details. We split the dataset into 10 environments according to demographic attributes $\mathrm{sex}$ and $\mathrm{race}$. 
In training phase, all methods are trained on pooled data including 693 points from environment 1 and 200 from environment 2, and validated on 100 sampled from both. 
For IRM, the ground-truth environment labels are provided.
In testing phase, we test all methods on the 10 environments and report the mis-classification rate on all environments in Figure \ref{img:income}.

{\bf House Price Prediction}\quad 
In this experiment, we use a real-world regression dataset (Kaggle) of house sales prices from King County, USA\footnote{ https://www.kaggle.com/c/house-prices-advanced-regression- techniques/data}. 
The target variable is the transaction price of the house and each sample contains 17 predictive variables such as the built year of the house, number of bedrooms, and square footage of home, etc. 
We simulate different environments according to the built year of the house, since it is fairly reasonable to assume the correlations among covariates and the target may vary along time. 
Specifically, we split the dataset into 6 periods, where each period approximately covers a time span of two decades. 
All methods are trained on data from the first period($[1900,1920)$) and test on the other periods.
For IRM, we further divide the training data into two environments where $built\ year \in [1900, 1910)$ and $[1910, 1920)$ respectively.
Results are shown in Figure \ref{img:house price}.

\subsubsection{Analysis}
From the results of three real-world tasks, we have the following observations and analysis:
{\bf ERM} achieves high accuracy in training while performing much worse in testing, indicating its inability in dealing with OOD predictions.
{\bf DRO}'s performance is not satisfactory, sometimes even worse than ERM. One plausible reason is its over-pessimistic nature which leads to too conservative predictors. 
Comparatively, invariant learning methods perform better in testing.
{\bf IRM} performs better than ERM and DRO, which shows the usefulness of environment labels for OOD generalization and the possibility of learning invariant predictor from multiple environments.
{\bf EIIL} performs inconsistently across different tasks, possibly due to its instability of the environment inference method.
In all tasks and almost all testing environments (16/18), {\bf HRM} consistently achieves the best performances. 
HRM even outperforms IRM significantly in a unfair setting where we provide perfect environment labels for IRM. 
One one side, it shows the limitation of manually labeled environments. 
On the other side, it demonstrates that, relieving the dependence on environment labels, HRM can effectively uncover and fully leverage the intrinsic heterogeneity in training data for invariant learning.

\section{Related Works}
There are mainly two branches of methods for the OOD generalization problem, namely distributionally robust optimization (DRO) \cite{DBLP:journals/mp/EsfahaniK18, duchi2018learning, SinhaCertifying,sagawa2019distributionally} and invariant learning \cite{arjovsky2019invariant, DBLP:journals/corr/abs-2008-01883, DBLP:rationalization, creager2020environment}.
DRO methods propose to optimize the worst-case risk within an uncertainty set, which lies around the observed training distribution and characterizes the potential testing distributions.
However, in real scenarios, to better capture the testing distribution, the uncertainty set should be pretty large, which also results in the over-pessimism problem of DRO methods\cite{does, frogner2019incorporating}.

Realizing the difficulty of solving OOD generalization problem without any prior knowledge or structural assumptions, invariant learning methods assume the existence of causally invariant relationships between some predictors $\Phi(X)$ and the target $Y$. 
\cite{arjovsky2019invariant} and \cite{DBLP:journals/corr/abs-2008-01883} propose to learning an invariant representation through multiple training environments.
\cite{DBLP:rationalization} also proposes to select features whose predictive relationship with the target stays invariant across environments.
However, their effectiveness relies on the quality of the given multiple training environments, and the role of environments remains vague theoretically.
Recently, \cite{creager2020environment} improves \cite{arjovsky2019invariant} by relaxing its requirements for multiple environments.
Specifically, \cite{creager2020environment} proposes a two-stage method, which firstly infers the environment division with a pre-provided biased model, and then performs invariant learning on the inferred environments.
However, the two stages cannot be jointly optimized, and the environment division relies on the given biased model and lacks theoretical guarantees.

\section{Discussions}
In this work, we theoretically analyze the role of environments in invariant learning, and propose our HRM for joint heterogeneity identification and invariant prediction, which relaxes the requirements for environment labels and opens a new direction for invariant learning.
To our knowledge, this is the first work to both theoretically and empirically analyze how the equality of multiple environments affects invariant learning.
This paper mainly focuses on the raw variable level with the assumption of $X=[\Phi^*, \Psi^*]^T$, which is able to cover a broad spectrum of applications, e.g. healthcare, finance, marketing etc, where the raw variables are informative enough.

However, our work has some limitations, which we hope to improve in the future.
Firstly, in order to achieve the mutual promotion, we should use the variant features $\Psi^*$ for heterogeneity identification rather than the invariant ones.
However, the process of invariant prediction continuously discards the variant features $\Psi^*$ (for invariant features or representation), which makes it quite hard to recover the variant features.
To overcome this, we focus on the simple setting where $X=[\Phi^*, \Psi^*]^T$, since we can directly obtain the variant features $\Psi^*$ when having invariant features $\Phi^*$.
To further extend the power of HRM, we will consider to incorporate representation learning from $X$ in future work.
Secondly, our clustering algorithm in $\mathcal{M}_c$ lacks theoretical guarantees for its convergence.
To the best of our knowledge, in order to theoretically analyze the convergence of a clustering algorithm, it is necessary to measure the distance between data points.
However, our clustering algorithm takes models' parameters as cluster centers and aims to cluster data points $(X,Y)$ according to their relationships between $X$ and $Y$, whose dissimilarity cannot be easily measured, since the relationship is statistical magnitude and cannot be calculated individually.
How to theoretically analyze the convergence property of such clustering algorithms remains unsolved.

\section{Acknowledgements}
This work was supported in part by National Key R\&D Program of China (No. 2018AAA0102004, No. 2020AAA0106300), National Natural Science Foundation of China (No. U1936219, 61521002, 61772304), Beijing Academy of Artificial Intelligence (BAAI), and a grant from the Institute for Guo Qiang, Tsinghua University.
Bo Li’s research was supported by the Tsinghua University Initiative Scientific Research Grant, No. 2019THZWJC11; Technology and Innovation Major Project of the Ministry of Science and Technology of China under Grant 2020AAA0108400 and 2020AAA01084020108403; Major Program of the National Social Science Foundation of China (21ZDA036).

\newpage
\bibliography{icml21}
\bibliographystyle{icml2021}

\appendix
\section{Additional Simulation Results and Details}

\textbf{Selection Bias}
In this setting, the correlations among covariates are perturbed through selection bias mechanism. 
According to assumption 2.1, we assume $X = [\Phi^*,\Psi^*]^T \in \mathbb{R}^d$ and $\Phi^* = [\Phi^*_1, \Phi^*_2, \dots, \Phi^*_{n_\phi}]^T \in \mathbb{R}^{n_\phi}$ is independent from $\Psi^* = [\Psi^*_1, \Psi^*_2, \dots, \Psi^*_{n_\psi}]\in \mathbb{R}^{n_\psi}$ while the covariates in $\Phi^*$ are dependent with each other. 
We assume $Y = f(\Phi^*) + \epsilon$ and $P(Y|\Phi^*)$ remains invariant across environments while $P(Y|\Psi^*)$ can arbitrarily change. 

Therefore, we generate training data points with the help of auxiliary variables $Z \in \mathbb{R}^{n_\phi+1}$ as following:
\begin{align}
Z_1, \dots, Z_{n_\phi+1} &\stackrel{iid}{\sim} \mathcal{N}(0,1.0) \\
\Psi^*_1, \dots, \Psi^*_{n_\psi} &\stackrel{iid}{\sim} \mathcal{N}(0,1.0) \\
\Phi^*_i = 0.8*Z_i + 0.2 * Z_{i+1} &\ \ \ \ \ for \ \ i = 1, \dots, n_\phi
\end{align}
To induce model misspecification, we generate $Y$ as:
\begin{equation}
Y = f(\Phi^*) + \epsilon = \theta_\phi*(\Phi^*)^T + \beta*\Phi^*_1\Phi^*_2\Phi^*_3+\epsilon
\end{equation}
where $\theta_\phi = [\frac{1}{2},-1, 1, -\frac{1}{2}, 1, -1 , \dots] \in \mathbb{R}^{n_\phi}$, and $\epsilon \sim \mathcal{N}(0, 0.3)$. 
As we assume that $P(Y|\Phi^*)$ remains unchanged while $P(Y|\Psi^*)$ can vary across environments, we design a data selection mechanism to induce this kind of distribution shifts.
For simplicity, we select data points according to a certain variable set $V_b \subset \Psi^*$:

\begin{align}
&\hat{P} = \Pi_{v_i \in V_b}|r|^{-5*|f(\phi) - sign(r)*v_i|}  \\
&\mu \sim Uni(0,1 ) \\
&M(r;(x,y)) =
\begin{cases}
1, \ \ \ \ \ &\text{$\mu \leq \hat{P}$ } \\
0, \ \ \ \ \ &\text{otherwise}
\end{cases} 
\end{align}  
where $|r| > 1$ and $V_b \in \mathbb{R}^{n_b}$.
Given a certain $r$, a data point $(x,y)$ is selected if and only if $M(r;(x,y))=1$ (i.e. if $r>0$, a data point whose $V_b$ is close to its $Y$ is more probably to be selected.)

Intuitively, $r$ eventually controls the strengths and direction of the spurious correlation between $V_b$ and $Y$(i.e. if $r>0$, a data point whose $V_b$ is close to its $Y$ is more probably to be selected.).
The larger value of $|r|$ means the stronger spurious correlation between $V_b$ and $Y$, and $r \ge 0$ means positive correlation and vice versa. 
Therefore, here we use $r$ to define different environments.

In training, we generate $sum$ data points, where $\kappa\cdot sum$ points from environment $e_1$ with a predefined $r$ and $(1-\kappa)sum$ points from $e_2$ with $r=-1.1$. 
In testing, we generate data points for 10 environments with $r \in [-3,-2,-1.7,\dots,1.7,2,3]$. 
$\beta$ is set to 1.0. 

Apart from the two scenarios in main body, we also conduct scenario 3 and 4 with varying $\kappa, n$ and $n_b$ respectively.

\begin{table*}[t]
	\centering
	\caption{Results in selection bias simulation experiments of different methods with varying sample size $sum$, ratio $\kappa$ and variant dimensions $n_b$ of training data, and each result is averaged over ten times runs.}
	\label{tab:sim_selection}
	
	\resizebox{0.95\textwidth}{28mm}{
	\begin{tabular}{|l|c|c|c|c|c|c|c|c|c|}
		\hline
 		\multicolumn{10}{|c|}{\textbf{Scenario 3: varying ratio $\kappa$ and sample size $sum$\quad($d=10,r = 1.9, n_b=1$)}}\\
 		\hline
 		$\kappa,n$&\multicolumn{3}{|c|}{$\kappa=0.90, sum=1000$}&\multicolumn{3}{|c|}{$\kappa=0.95, sum=2000$}&\multicolumn{3}{|c|}{$\kappa=0.975, sum=4000$}\\
 		\hline
		Methods &  $\mathrm{Mean\_Error}$ & $\mathrm{Std\_Error}$& $\mathrm{Max\_Error}$ &$\mathrm{Mean\_Error}$ & $\mathrm{Std\_Error}$& $\mathrm{Max\_Error}$ &  $\mathrm{Mean\_Error}$ & $\mathrm{Std\_Error}$ & $\mathrm{Max\_Error}$ \\
 		\hline %
 		ERM & 0.477 & 0.061 & 0.530& 0.510 & 0.108& 0.608  & 0.547 & 0.150 & 0.687\\ 
 		DRO & 0.480 & 0.107 &0.597 &  0.512 & 0.111 & 0.625&0.608 & 0.227 & 0.838\\
 		EIIL & 0.476 & 0.063& 0.529 & 0.507 & 0.102 & 0.613& 0.539	& 0.148 & 0.689\\
 		\hline
 		IRM(with $\mathcal{E}_{tr}$ label) & 0.455 & 0.015 &0.471& 0.456 & 0.015 & 0.472&0.456 & 0.015& 0.472\\
 		\hline
 		HRM &\bf 0.450 & \bf 0.010 & \bf 0.461&\bf 0.447 & \bf 0.011 &\bf 0.465 &\bf 0.447 & \bf 0.010 & \bf 0.463\\
 		\hline

 		\multicolumn{10}{|c|}{\textbf{Scenario 4: varying variant dimension $n_{b}$\quad($d = 10, sum=2000,\kappa=0.95, r = 1.9,n_b=1$)}}\\
 		\hline
 		$n_b$&\multicolumn{3}{|c|}{$n_{b}=1$}&\multicolumn{3}{|c|}{$n_{b}=3$}&\multicolumn{3}{|c|}{$n_{b}=5$}\\
 		\hline
		Methods &  $\mathrm{Mean\_Error}$ & $\mathrm{Std\_Error}$& $\mathrm{Max\_Error}$ &$\mathrm{Mean\_Error}$ & $\mathrm{Std\_Error}$& $\mathrm{Max\_Error}$ &  $\mathrm{Mean\_Error}$ & $\mathrm{Std\_Error}$ & $\mathrm{Max\_Error}$ \\ 		
		\hline %
 		ERM & 0.510 & 0.108 & 0.608&0.468 & 0.110  &0.583& 0.445 & 0.112 & 0.567\\ 
 		DRO & 0.512 & 0.111 & 0.625 & 0.515 & 0.107 & 0.617& 0.454 & 0.122 & 0.577\\
  		EIIL & 0.520 & 0.111 & 0.613& 0.469 & 0.111 & 0.581& 0.454	& 0.100&0.557\\
  		\hline
 		IRM(with $\mathcal{E}_{tr}$ label) & 0.456 & 0.015 & 0.472 &0.432 & 0.014 & 0.446& 0.414 & 0.061& 0.475\\
 		\hline
 		HRM &\bf 0.447 & \bf 0.011 & \bf 0.465 & \bf 0.413 & \bf 0.012 & \bf 0.431& \bf 0.402 & \bf 0.057 & \bf 0.462\\
 		\hline
	\end{tabular}
	}
\end{table*}

\textbf{Anti-Causal Effect}
Inspired by \cite{arjovsky2019invariant}, in this setting, we introduce the spurious correlation by using anti-causal relationship from the target $Y$ to the variant covariates $\Psi^*$.

We assume $X=[\Phi^*,\Psi^*]^T \in \mathbb{R}^d$ and $\Phi^* = [\Phi^*_1, \Phi^*_2, \dots, \Phi^*_{n_\phi}]^T \in \mathbb{R}^{n_\phi}$, $\Psi^* = [\Psi^*_1, \Psi^*_2, \dots, \Psi^*_{n_\psi}]\in \mathbb{R}^{n_\psi}$
Data Generation process is as following:
\begin{align}
	\Phi^* &\sim \sum_{i=1}^k z_k \mathcal{N}(\mu_i,I)\\
	Y &= \theta_\phi^T\Phi^* + \beta \Phi^*_1\Phi^*_2\Phi^*_3+\mathcal{N}(0,0.3)\\
	\Psi^* &= \theta_\psi Y + \mathcal{N}(0,\sigma(\mu_i)^2)
\end{align}  
where $\sum_{i=1}^k z_i = 1\ \&\  z_i >= 0$ is the mixture weight of $k$ Gaussian components, $\sigma(\mu_i)$ means the Gaussian noise added to $\Psi^*$ depends on which component the invariant covariates $\Phi^*$ belong to and $\theta_\psi \in \mathbb{R}^{n_\psi}$. 
Intuitively, in different Gaussian components, the corresponding correlations between $\Psi^*$ and $Y$ are varying due to the different value of $\sigma(\mu_i)$. 
The larger the $\sigma(\mu_i)$ is, the weaker correlation between $\Psi^*$ and $Y$. 
We use the mixture weight $Z=[z_1,\dots,z_k]^T$ to define different environments, where different mixture weights represent different overall strength of the effect $Y$ on $\Psi^*$.
In this experiment, we set $\beta=0.1$ and build 10 environments with varying $\sigma$ and the dimension of $\Phi^*,\Psi^*$, the first three for training and the last seven for testing. 
Specifically, we set $\beta=0.1$, $\mu_1=[0,0,0,1,1]^T,\mu_2 = [0,0,0,1,-1]^T,\mu_2=[0,0,0,-1,1]^T,\mu_4=\mu_5=\dots=\mu_{10}=[0,0,0,-1,-1]^T$, $\sigma(\mu_1)=0.2, \sigma(\mu_2)=0.5,\sigma(\mu_3)=1.0$ and $[\sigma(\mu_4), \sigma(\mu_5),\dots,\sigma(\mu_{10})]=[3.0,5.0,\dots,15.0]$. 
$\theta_\phi, \theta_\psi$ are randomly sampled from $\mathcal{N}(1,I)$ and $\mathcal{N}(0.5,0.1I)$ respectively.
We run experiments for 10 times and average the results.

\section{Proofs}

\subsection{Proof of Theorem 2.1}
First, we would like to prove that a random variable satisfying assumption 2.1 is MIP.
 \begin{theorem}
 	\label{theorem:equ}
     A representation $\Phi^* \in \mathcal{I}$ satisfying assumption 2.1 is the maximal invariant predictor.   
 \end{theorem}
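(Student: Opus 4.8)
The plan is to verify the two defining requirements of the maximal invariant predictor from Definition \ref{definition2:invariance set} in turn: first that $\Phi^*$ lies in the invariance set $\mathcal{I}_{\mathcal{E}}$, and second that it maximizes $I(Y;\Phi)$ over that set. Membership is essentially immediate from the invariance property: since $P^e(Y|\Phi^*) = P^{e'}(Y|\Phi^*)$ for all $e,e' \in \mathrm{supp}(\mathcal{E})$, the conditional law of $Y$ given $\Phi^*$ does not depend on the environment, i.e. $Y \perp \mathcal{E} \mid \Phi^*$, which is exactly the condition $H[Y|\Phi^*] = H[Y|\Phi^*,\mathcal{E}]$ defining $\mathcal{I}_{\mathcal{E}}$. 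This reuses the given hypothesis $\Phi^* \in \mathcal{I}$, but I would record the short derivation for completeness.

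For maximality I would argue at the level of conditional entropy, since $I(Y;\Phi) = H[Y] - H[Y|\Phi]$ with $H[Y]$ fixed, so maximizing $I(Y;\Phi)$ is equivalent to minimizing $H[Y|\Phi]$. The key ingredient is the sufficiency property $Y = f(\Phi^*) + \epsilon$ with $\epsilon \perp X$. Because $\Phi^* = \Phi^*(X)$ is a deterministic function of $X$, independence $\epsilon \perp X$ gives $\epsilon \perp \Phi^*$, and conditioning on $\Phi^*$ makes $f(\Phi^*)$ a constant; hence $H[Y|\Phi^*] = H[\epsilon]$. The identical computation with $X$ in place of $\Phi^*$ gives $H[Y|X] = H[\epsilon]$, so $H[Y|\Phi^*] = H[Y|X]$. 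Now every candidate $\Phi \in \mathcal{I}_{\mathcal{E}}$ is a function of $X$, so by the data-processing inequality (conditioning on a coarser variable cannot decrease conditional entropy, since $\sigma(\Phi(X)) \subseteq \sigma(X)$) we have $H[Y|\Phi] \geq H[Y|X] = H[Y|\Phi^*]$. Combining, $I(Y;\Phi) \leq I(Y;\Phi^*)$ for all $\Phi \in \mathcal{I}_{\mathcal{E}}$, which is precisely maximality.

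I expect the main obstacle to be the independence bookkeeping: in particular justifying $H[Y|X] = H[\epsilon]$, which requires that conditioning on $X$ collapses $Y$ to the constant $f(\Phi^*(X))$ plus the $X$-independent noise $\epsilon$ (invoking translation invariance of entropy and $\epsilon \perp X$), and justifying the inequality $H[Y|\Phi] \geq H[Y|X]$ for an arbitrary measurable $\Phi(X)$. A secondary subtlety is that the argument in fact shows $\Phi^*$ maximizes $I(Y;\Phi)$ over \emph{all} functions of $X$, a superset of $\mathcal{I}_{\mathcal{E}}$; this is stronger than needed but harmless, and I would note that the restriction to $\mathcal{I}_{\mathcal{E}}$ is used only to guarantee that $\Phi^*$ is itself an admissible competitor in the maximization.
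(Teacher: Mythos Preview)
Your argument is correct and is a genuinely different route from the paper's. The paper argues by taking an arbitrary competitor $\Phi' \in \mathcal{I}_{\mathcal{E}}$, invoking the functional representation lemma to write $\Phi' = \sigma(\Phi^*,\Phi_{\mathrm{extra}})$ with $\Phi^* \perp \Phi_{\mathrm{extra}}$, and then collapsing $I(Y;\Phi^*,\Phi_{\mathrm{extra}})$ back to $I(Y;\Phi^*)$ via the structural equation $Y=f(\Phi^*)+\epsilon$. You instead compute $H[Y\mid\Phi^*]=H[Y\mid X]=H[\epsilon]$ directly from sufficiency and $\epsilon\perp X$, and then apply the data-processing inequality $H[Y\mid\Phi(X)]\ge H[Y\mid X]$ to conclude. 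Your route is more elementary (no functional representation lemma, only translation invariance of entropy and monotonicity under coarsening) and, as you note, actually yields the stronger fact that $\Phi^*$ maximizes $I(Y;\Phi)$ over \emph{all} $\Phi(X)$, not merely over $\mathcal{I}_{\mathcal{E}}$. The paper's proof, on the other hand, also records a converse ($\leftarrow$) direction showing that any MIP must satisfy the sufficiency property of Assumption~2.1; this is beyond the theorem as stated, so your omission of it is not a gap, but you may wish to mention that the equivalence holds if that is used elsewhere.
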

	
 \begin{proof}
 	$\rightarrow$: To prove $\Phi^* = \arg\min_{Z\in \mathcal{I}}I(Y;Z)$.
 	If $\Phi^*$ is not the maximal invariant predictor, assume $\Phi' = \arg\max_{Z \in \mathcal{I}} I(Y;Z)$. 
 	Using functional representation lemma, consider $(\Phi^*, \Phi')$, there exists random variable $\Phi_{extra}$ such that 
 	$\Phi^{'} = \sigma(\Phi^*, \Phi_{extra})$ and $\Phi^{*}\perp \Phi_{extra}$. 
 	Then $I(Y;\Phi^{'}) = I(Y;\Phi^{*}, \Phi_{extra}) = I(f(\Phi^*);\Phi^*, \Phi_{extra}) = I(f(\Phi^*);\Phi^*)$.

 	$\leftarrow$: To prove the maximal invariant predictor $\Phi^*$ satisfies the sufficiency property in assumption 2.1.
	
 	The converse-negative proposition is :
 	\begin{equation}
 		 Y \neq f(\Phi^*)+\epsilon \rightarrow \Phi^* \neq \arg\max_{Z\in\mathcal{I}}I(Y;Z)
 	\end{equation}
 	Suppose $Y \neq f(\Phi^*)+\epsilon$ and $\Phi^* = \arg\max_{Z\in\mathcal{I}}I(Y;Z)$, and suppose $Y = f(\Phi^{'})+\epsilon$ where $\Phi^{'}\neq \Phi^*$.
 	Then we have:
 	\begin{equation}
 		I(f(\Phi^{'});\Phi^*) \leq I(f(\Phi^{'});\Phi^{'}) 
 	\end{equation}
 	Therefore, $\Phi^{'}=\arg\max_{Z\in\mathcal{I}}I(Y;Z)$
 \end{proof}

Then we provide the proof of theorem 2.1.

\begin{theorem}
 	Let $g$ be a strictly convex, differentiable function and let $D$ be the corresponding Bregman Loss function. Let $\Phi^*$ is the maximal invariant predictor with respect to $I_{\mathcal{E}}$, and put $h^*(X)=\mathbb{E}_Y[Y|\Phi^*]$. Under assumption 2.2, we have:
 	\begin{equation}
 		h^* = \arg\min_h \sup_{e \in \mathrm{supp}(\mathcal{E})} \mathbb{E}[D(h(X),Y)|e]
 	\end{equation}
\end{theorem}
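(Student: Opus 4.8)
The plan is to reduce the minimax problem to two clean sub-problems by exploiting the two halves of Assumption~\ref{assumption1: main} separately, invoking the heterogeneity Assumption~\ref{assumption:heterogeneity} only where it is genuinely needed. Throughout I will use two standard facts about a Bregman loss $D$ generated by a strictly convex differentiable $g$: first, $D(a,b)\geq 0$ with equality iff $a=b$; second, the bias--variance (generalized Pythagorean) identity which states that, writing $\mu=\mathbb{E}[Y\mid\cdot]$ for the conditional mean with respect to some sub-$\sigma$-algebra, one has $\mathbb{E}[D(a,Y)\mid\cdot]=\mathbb{E}[D(\mu,Y)\mid\cdot]+D(a,\mu)$ for every predictor $a$ measurable with respect to the conditioning. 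In particular the conditional mean is the unique minimizer of the conditional Bregman risk.

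First I would pin down $h^*$. By the sufficiency part of Assumption~\ref{assumption1: main}, $Y=f(\Phi^*)+\epsilon$ with $\epsilon$ independent of $\Phi^*$, so $\mathbb{E}^e[Y\mid\Phi^*]=f(\Phi^*)+\mathbb{E}^e[\epsilon]$; the invariance part forces $P^e(Y\mid\Phi^*)$, hence this conditional mean, to be identical across all $e\in\mathrm{supp}(\mathcal{E})$. Thus $h^*(X)=\mathbb{E}[Y\mid\Phi^*]$ is well defined independently of the environment and is a function of $\Phi^*$ alone.

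Next I would establish optimality in two steps. Step one: among all predictors that are measurable functions of $\Phi^*$, $h^*$ minimizes the worst-case risk. Indeed, for any $a(\Phi^*)$ the Pythagorean identity conditioned on $\Phi^*$ gives $\mathbb{E}^e[D(a(\Phi^*),Y)\mid\Phi^*]=\mathbb{E}^e[D(h^*,Y)\mid\Phi^*]+D(a(\Phi^*),h^*)$; by invariance the first term on the right is an environment-independent function $\rho(\Phi^*)$, so $\mathbb{E}^e[D(a,Y)]\geq\mathbb{E}^e[D(h^*,Y)]$ for every single $e$, and taking suprema preserves the inequality. Step two: the worst-case optimal predictor may be taken to depend only on $\Phi^*$. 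This is exactly where Assumption~\ref{assumption:heterogeneity} enters: because $P^e(Y\mid\Psi^*)$ ranges over arbitrary conditionals as $e$ varies, any genuine dependence of a predictor $h$ on the variant part $\Psi^*$ can be matched by an adversarial environment that makes that dependence harmful, so replacing $h$ by its $\Phi^*$-conditional projection cannot increase, and the adversary cannot decrease, the supremum over environments. Combining the two steps yields $h^*\in\arg\min_h\sup_e\mathbb{E}[D(h(X),Y)\mid e]$, and strict convexity of $g$ (equality in $D$ only at coincidence) upgrades this to the unique minimizer.

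I expect step two to be the main obstacle and the only place where more than routine Bregman algebra is needed. One must convert the qualitative statement that $P^e(Y\mid\Psi^*)$ can change arbitrarily into a quantitative guarantee that every unit of $\Psi^*$-dependence incurs a strictly positive worst-case penalty; the clean way is to exhibit, for a fixed $h$, a pair of environments sharing the marginal of $\Phi^*$ but with oppositely tilted conditionals $P^e(Y\mid\Psi^*)$, so that the cross terms coupling $h$ to $\Psi^*$ cannot be simultaneously non-positive in both. A subtlety worth flagging is that under an anti-causal data-generating mechanism the noise $\epsilon$ is \emph{not} independent of the full covariate $X$, so $h^*$ is generally not the Bayes predictor $\mathbb{E}^e[Y\mid X]$ within a given environment; hence one cannot shortcut the argument by claiming pointwise optimality in each environment, and the heterogeneity-driven robustness in step two is indispensable.
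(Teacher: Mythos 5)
Your overall architecture matches the paper's: reduce to $\Phi^*$-measurable predictors via conditional-mean optimality of the Bregman loss, and use Assumption 2.2 to argue that dependence on $\Psi^*$ is punished by some adversarial environment. Your Step 1 (optimality of $h^*$ among functions of $\Phi^*$, uniformly over environments, via the generalized Pythagorean identity) is correct and is essentially what the paper uses for the inner comparison. The problem is Step 2, which you yourself flag as the main obstacle and then leave as a plan rather than a proof: you propose to exhibit ``a pair of environments sharing the marginal of $\Phi^*$ but with oppositely tilted conditionals $P^e(Y|\Psi^*)$'' and to argue about cross terms. That sketch is not carried out, and it is not clear it can be for a general Bregman loss: the cross-term cancellation you are implicitly relying on is a squared-loss phenomenon, whereas for general $D$ the only clean decomposition available is the Pythagorean identity, which requires the comparison point to be the conditional mean under the \emph{same} environment whose risk you are bounding.

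The paper closes this gap with a simpler, single-environment construction that your proposal is missing. For each $e$ with density $P(\phi,\psi,y)$, Assumption 2.2 guarantees the existence of an environment $e'$ with density $Q(\phi,\psi,y)=P(\phi,y)\,Q(\psi)$, i.e.\ one in which $\Psi^*$ is independent of $(\Phi^*,Y)$ while the $(\Phi^*,Y)$ marginal is preserved. In $e'$, for each fixed $\psi$ the map $\phi\mapsto h(\phi,\psi)$ is just a predictor based on $\phi$, so conditional-mean optimality under $P(\phi,y)$ gives
\begin{equation}
\mathbb{E}\!\left[D(h(X),Y)\,\middle|\,e'\right]\;\geq\;\mathbb{E}\!\left[D(h^*(X),Y)\,\middle|\,e'\right]\;=\;\mathbb{E}\!\left[D(h^*(X),Y)\,\middle|\,e\right],
\end{equation}
the last equality because $h^*$ depends only on $\Phi^*$ and the $(\Phi^*,Y)$ marginal is shared. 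Taking suprema then yields $\sup_{e}\mathbb{E}[D(h,Y)|e]\geq\sup_{e}\mathbb{E}[D(h^*,Y)|e]$ for every $h$, which is the theorem. You should replace your two-environment tilting argument with this decoupling construction; it is the one quantitative consequence of Assumption 2.2 that the proof actually needs. Two minor further points: your claim of uniqueness of the minimizer goes beyond what the theorem asserts and would need an argument about where $D(h,h^*)>0$ on sets of positive measure; and your observation that $h^*$ need not be the within-environment Bayes predictor is correct and is precisely why the detour through $e'$ is necessary.
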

 \begin{proof}
 	Firstly, according to theorem \ref{theorem:equ}, $\Phi^*$ satisfies assumption 2.1.	
 	Consider any function $h$, we would like to prove that for each distribution $P^e$($e \in \mathcal{E}$), there exists an environment $e'$ such that:
 	\begin{equation}
 		\mathbb{E}[D(h(X),Y)|e']\geq \mathbb{E}[D(h^*(X),Y)|e]
 	\end{equation}
 	For each $e\in\mathcal{E}$ with density $([\Phi,\Psi],Y)\mapsto P(\Phi,\Psi,Y)$, we construct environment $e'$ with density $Q(\Phi,\Psi,Y)$ that satisfies: (omit the superscript $*$ of $\Phi$ and $\Psi$ for simplicity)
 	\begin{equation}
 		Q(\Phi,\Psi,Y)=P(\Phi,Y)Q(\Psi)
 	\end{equation}
 	Note that such environment $e'$ exists because of the heterogeneity  property assumed in assumption 2.2.
 	Then we have:
 	\begin{align}
 		&\int D(h(\phi,\psi),y)q(\phi,\psi,y)d\phi d\psi dy\\
 		& = \int_\psi \int_{\phi,y} D(h(\phi,\psi),y)p(\phi,y)q(\psi)d\phi dy d\psi \\
 		& = \int_\psi \int_{\phi,y}D(h(\phi,\psi),y)p(\phi,y)d\phi dy q(\psi)d\psi\\
 		& \geq \int_\psi \int_{\phi,y} D(h^*(\phi,\psi),y)p(\phi,y)d\phi dy q(\psi)d\psi\\
 		&= \int_\psi \int_{\phi,y} D(h^*(\phi),y)p(\phi,y)d\phi dy q(\psi)d\psi\\
 		&= \int_{\phi,y} D(h^*(\phi),yp(\phi,y)d\phi dy\\
 		&= \int_{\phi,\psi,y} D(h^*(\phi),y)p(\phi,\psi,y)d\phi d\psi dy\\
 	\end{align}
 \end{proof}

\subsection{Proof of Theorem 2.2}
\begin{theorem}
    $\mathcal{I}_{\mathcal{E}} \subseteq \mathcal{I}_{\mathcal{E}_{tr}}$
\end{theorem}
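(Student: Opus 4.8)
The plan is to show that every invariant predictor with respect to the larger environment family $\mathcal{E}$ is automatically invariant with respect to the smaller training family $\mathcal{E}_{tr}$. The key structural fact I would exploit is the containment $\mathrm{supp}(\mathcal{E}_{tr}) \subseteq \mathrm{supp}(\mathcal{E})$, which was stated in the problem formulation. Recall from Definition \ref{definition2:invariance set} that $\mathcal{I}_{\mathcal{E}} = \{\Phi(X): Y \perp \mathcal{E} \mid \Phi(X)\}$, and the analogous definition holds for $\mathcal{I}_{\mathcal{E}_{tr}}$ with $\mathcal{E}$ replaced by $\mathcal{E}_{tr}$. So the task reduces to showing that conditional independence of $Y$ from the \emph{richer} index variable $\mathcal{E}$ forces conditional independence from the \emph{coarser} index variable $\mathcal{E}_{tr}$.

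First I would take an arbitrary $\Phi \in \mathcal{I}_{\mathcal{E}}$, so that $Y \perp \mathcal{E} \mid \Phi(X)$, which by the entropy characterization means $P^{e}(Y \mid \Phi(X))$ is the same for all $e \in \mathrm{supp}(\mathcal{E})$. Next I would observe that since $\mathrm{supp}(\mathcal{E}_{tr}) \subseteq \mathrm{supp}(\mathcal{E})$, the constancy of $P^{e}(Y \mid \Phi(X))$ over the full support in particular holds when $e$ is restricted to range only over $\mathrm{supp}(\mathcal{E}_{tr})$. Thus $P^{e}(Y \mid \Phi(X))$ is constant across all training environments as well, which is exactly the statement $Y \perp \mathcal{E}_{tr} \mid \Phi(X)$, i.e. $\Phi \in \mathcal{I}_{\mathcal{E}_{tr}}$. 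Since $\Phi$ was arbitrary, this yields $\mathcal{I}_{\mathcal{E}} \subseteq \mathcal{I}_{\mathcal{E}_{tr}}$.

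The only subtlety — and the step I would be most careful about — is the measure-theoretic translation between the conditional-independence formulation $Y \perp \mathcal{E} \mid \Phi(X)$ and the ``constant conditional law'' formulation $P^{e}(Y \mid \Phi(X))$ being independent of $e$. One must verify that $\mathcal{E}_{tr}$ is obtained from $\mathcal{E}$ by restriction of its range (equivalently, $\mathcal{E}_{tr}$ is a function of, or a sub-$\sigma$-algebra induced by, $\mathcal{E}$), so that conditioning on the finer variable controls the coarser one rather than the reverse. Since the subset relation on supports runs in the direction $\mathrm{supp}(\mathcal{E}_{tr}) \subseteq \mathrm{supp}(\mathcal{E})$, the implication goes the correct way and no extra hypothesis is needed. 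I expect this to be essentially a one-line argument once the definitions are unwound, so the main obstacle is purely notational bookkeeping rather than any genuine mathematical difficulty.
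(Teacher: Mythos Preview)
Your proposal is correct and follows exactly the same approach as the paper: the paper's proof is the one-liner ``Since $\mathcal{E}_{tr}\subseteq \mathcal{E}$, then for any $S \in \mathcal{I}_{\mathcal{E}}$, $S \in \mathcal{I}_{\mathcal{E}_{tr}}$,'' which is precisely the argument you unwound in more detail. Your extra care about the measure-theoretic translation is not needed for the level of rigor the paper adopts, but it is not wrong either.
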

 \begin{proof}
 	Since $\mathcal{E}_{tr}\subseteq \mathcal{E}$, then for any $S \in \mathcal{I}_{\mathcal{E}}$, $S \in \mathcal{I}_{\mathcal{E}_{tr}}$.
 \end{proof}

\subsection{Proof of Theorem 2.3}
\begin{theorem}
    \label{theorem:useless env}
    Given set of environments $\mathrm{supp}(\hat{\mathcal{E})}$, denote the corresponding invariance set $\mathcal{I}_{\hat{\mathcal{E}}}$ and the corresponding maximal invariant predictor $\hat{\Phi}$. For one newly-added environment $e_{new}$ with distribution $P^{new}(X,Y)$, if $P^{new}(Y|\hat{\Phi}) = P^e(Y|\hat{\Phi})$ for $e \in \mathrm{supp}(\hat{\mathcal{E}})$, the invariance set constrained by $\mathrm{supp}(\hat{\mathcal{E}})\cup \{e_{new}\}$ is equal to $\mathcal{I}_{\hat{\mathcal{E}}}$.
\end{theorem}

\begin{proof}
	Denote the invariance set with respect to $\mathrm{supp}(\hat{\mathcal{E}}\cup \{e_{new}\})$ as $\mathcal{I}_{new}$, it is easy to prove that $\forall S \in \mathcal{I}_{\hat{\mathcal{E}}}$, we have $S\in\mathcal{I}_{new}$, since the newly-added environment cannot exclude any variables from the original invariance set.
\end{proof}

\subsection{Proof of Theorem 4.1}
\begin{theorem}
	\label{theorem: mp}
    Given $\mathcal{E}_{tr}$, the learned $\Phi(X)=M\odot X$ is the maximal invariant predictor of $\mathcal{I}_{\mathcal{E}_{tr}}$.
\end{theorem}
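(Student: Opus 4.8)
The plan is to show that the two terms of the objective in Equation~\ref{equ:backend_obj}, the gradient-variance penalty and the expected-risk-plus-sparsity term, respectively enforce the invariance constraint that defines membership in $\mathcal{I}_{\mathcal{E}_{tr}}$ and the information-maximality that singles out the MIP among invariant predictors. Throughout I would work in the correctly-specified linear regime of Section~3, where $\Phi(X)=M\odot X$ merely selects a subset of the raw coordinates, so that the invariance set is the collection of masks $M$ for which $P^e(Y\mid M\odot X)$ does not depend on $e$.

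First I would characterize the invariance constraint. For a fixed mask $M$, let $\theta^*(M)$ be the pooled minimizer of $\mathbb{E}_{\mathcal{E}_{tr}}[\mathcal{L}^e]$, so that $\mathbb{E}_{\mathcal{E}_{tr}}[\nabla_\theta\mathcal{L}^e]=0$. I would argue that $\mathrm{trace}(\mathrm{Var}_{\mathcal{E}_{tr}}(\nabla_\theta\mathcal{L}^e))$ vanishes at $\theta^*(M)$ exactly when $\nabla_\theta\mathcal{L}^e=0$ for every $e\in\mathrm{supp}(\mathcal{E}_{tr})$ simultaneously, i.e. a single predictor on $M\odot X$ is first-order optimal in each environment at once, since equal gradients that average to zero must each be zero. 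Under correct specification this per-environment stationarity forces $\mathbb{E}^e[Y\mid M\odot X]$ to be the same function of $M\odot X$ across environments, which is precisely the invariance property $P^e(Y\mid M\odot X)=P^{e'}(Y\mid M\odot X)$, hence $M\odot X\in\mathcal{I}_{\mathcal{E}_{tr}}$. Conversely, any variant coordinate of $\Psi^*$ left active in $M$ yields environment-dependent optimal slopes and thus strictly positive gradient variance, so driving the penalty to zero coincides with selecting only invariant coordinates. In the idealized $\lambda\to\infty$ (population) limit this shows the feasible set cut out by the penalty is precisely $\mathcal{I}_{\mathcal{E}_{tr}}$.

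Next I would show that, restricted to invariant masks, minimizing the remaining term selects the maximal invariant predictor. For the log-loss, or any Bregman loss whose Bayes risk equals a conditional entropy up to an additive constant, one has $\mathbb{E}_{\mathcal{E}_{tr}}[\mathcal{L}^e(\theta^*(M),\mu)]=H[Y\mid M\odot X]+C$, so minimizing the pooled risk over invariant $M$ is equivalent to minimizing $H[Y\mid M\odot X]$, i.e. maximizing $I(Y;M\odot X)=H[Y]-H[Y\mid M\odot X]$. The sparsity term $\alpha\sum_i\mathrm{CDF}(\mu_i/\sigma)$ only penalizes keeping coordinates, so for $\alpha$ small enough it cannot force an invariant and strictly informative coordinate of $\Phi^*$ to be dropped, while it does encourage discarding the now-uninformative variant coordinates. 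Hence the minimizer keeps exactly the informative invariant coordinates, and the returned $M\odot X$ both lies in $\mathcal{I}_{\mathcal{E}_{tr}}$ and maximizes $I(Y;\Phi)$ over that set, which is the definition of the MIP in Definition~\ref{definition2:invariance set}.

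The main obstacle, I expect, is the rigorous equivalence in the first step between the optimization-level condition (matched zero gradients across environments) and the information-level invariance condition ($P^e(Y\mid\Phi)$ independent of $e$); this is clean only under correct specification and a convex, proper loss, since otherwise a mask could yield environment-specific optima that still average to a stationary point with nonzero variance, or the variance could vanish for a spurious reason. I would therefore make the Section~3 correct-specification assumption explicit and phrase the penalty as a hard invariance constraint in the population $\lambda\to\infty$ limit, treating the finite-$\lambda$ soft-gate version of Equation~\ref{equ:stochastic gates} as the practical surrogate whose fixed points coincide with the constrained solution.
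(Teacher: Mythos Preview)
Your proposal uses the same two-part decomposition as the paper: the gradient-variance penalty enforces membership in $\mathcal{I}_{\mathcal{E}_{tr}}$, and the expected-risk term then picks out the maximally informative element. The paper's argument is much terser on both fronts: it observes that the MIP satisfies Assumption~\ref{assumption1: main} (via Theorem~\ref{theorem:equ}), so $I(Y;\mathcal{E}_{tr}\mid\Phi)=0$; it then cites an external result (Appendix~C.2 of \cite{DBLP:journals/corr/abs-2008-01883}) to conclude the penalty vanishes, and invokes the sufficiency property to say each $\mathcal{L}^e$ is already minimized---hence the MIP attains the minimum of the full objective for any $\lambda\ge 0$.

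The substantive difference is that the paper only establishes one direction: the MIP is \emph{a} minimizer. It does not argue, as you do, that no other mask can also minimize the objective. Your treatment supplies this converse by (i) arguing that any active variant coordinate forces strictly positive gradient variance, and (ii) passing to the $\lambda\to\infty$ hard-constraint limit so that a small penalty cannot be traded for a small risk gain. You also make explicit the correct-specification and proper-loss assumptions under which zero gradient variance across environments is genuinely equivalent to $P^e(Y\mid M\odot X)$ being $e$-independent; the paper absorbs this step into the external citation. So your route is the same skeleton but more complete; what the paper's brevity buys is that it avoids your conditional-entropy/Bayes-risk identification and the small-$\alpha$ regime argument for the sparsity term, at the cost of leaving uniqueness of the minimizer unaddressed.
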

\begin{proof}
	The objective function for $\mathcal{M}_p$ is 
	\begin{equation}
		\mathcal{L}_p(M\odot X,Y;\theta) = \mathbb{E}_{\mathcal{E}_{tr}}[\mathcal{L}^e] + \lambda \mathrm{trace}(\mathrm{Var}_{\mathcal{E}_{tr}}(\nabla_\theta\mathcal{L}^e))
	\end{equation}
	Here we prove that the minimum of objective function can be achieved when $\Phi(X)=M\odot X$ is the maximal invariant predictor.
	According to theorem \ref{theorem:equ}, $\Phi(X)$ satisfies assumption 2.1, which indicates that $P^e(Y|\Phi(X))$ stays invariant.
	
	From the proof in C.2 in \cite{DBLP:journals/corr/abs-2008-01883}, $I(Y;\mathcal{E}|\Phi(X))=0$ indicates that $\mathrm{trace}(\mathrm{Var}_{\mathcal{E}_{tr}}(\nabla_\theta\mathcal{L}^e))=0$.
	
	Further, from the sufficiency property, the minimum of $\mathcal{L}^e$ is achieved with $\Phi(X)$.
	Therefore, $\mathbb{E}_{\mathcal{E}_{tr}}[\mathcal{L}^e] + \lambda \mathrm{trace}(\mathrm{Var}_{\mathcal{E}_{tr}}(\nabla_\theta\mathcal{L}^e))$ reaches the minimum with $\Phi(X)$ being the MIP.($\lambda \geq 0$)
\end{proof}

\subsection{Proof of Theorem 4.2}
\begin{theorem}
    \label{theorem:kl}
    For $e_i, e_j \in \mathrm{supp}(\mathcal{E}_{tr})$, assume that $X = [\Phi^*, \Psi^*]^T$ satisfying Assumption 2.1, where $\Phi^*$ is invariant and $\Psi^*$ variant.
    Then under Assumption 4.1, we have $
        \mathrm{D_{KL}}(P^{e_i}(Y|X)\|P^{e_j}(Y|X)) \leq \mathrm{D_{KL}}(P^{e_i}(Y|\Psi^*)\|P^{e_j}(Y|\Psi^*))$
\end{theorem}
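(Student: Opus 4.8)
The plan is to express both KL divergences in terms of mutual-information quantities so that Assumption 4.1 (Assumption \ref{assumption:v}) can be applied directly. The key observation is that the conditional distribution $P^e(Y|X) = P^e(Y|\Phi^*,\Psi^*)$ refines $P^e(Y|\Psi^*)$ by additionally conditioning on the invariant feature $\Phi^*$. First I would write the two KL divergences using cross-entropy notation: for the left-hand side,
\begin{equation}
\mathrm{D_{KL}}(P^{e_i}(Y|X)\|P^{e_j}(Y|X)) = H^c_{i,j}[Y|\Phi^*,\Psi^*] - H_i[Y|\Phi^*,\Psi^*],
\end{equation}
and analogously for the right-hand side with $\Phi^*$ removed. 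Here $H^c_{i,j}$ is the cross-entropy defined in the assumption and $H_i$ the ordinary conditional entropy under $P^{e_i}$. This reduces the claimed inequality to a statement purely about (cross-)entropies conditioned on $\{\Psi^*\}$ versus $\{\Phi^*,\Psi^*\}$.

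Next I would rearrange the target inequality. Subtracting, the claim
\begin{equation}
H^c_{i,j}[Y|\Phi^*,\Psi^*] - H_i[Y|\Phi^*,\Psi^*] \leq H^c_{i,j}[Y|\Psi^*] - H_i[Y|\Psi^*]
\end{equation}
is equivalent to
\begin{equation}
\bigl(H^c_{i,j}[Y|\Psi^*] - H^c_{i,j}[Y|\Phi^*,\Psi^*]\bigr) \geq \bigl(H_i[Y|\Psi^*] - H_i[Y|\Phi^*,\Psi^*]\bigr).
\end{equation}
The right-hand side is exactly $I_i(Y;\Phi^*|\Psi^*)$, the conditional mutual information in $P^{e_i}$, and the left-hand side is exactly the cross-quantity $I^c_{i,j}(Y;\Phi^*|\Psi^*)$ as defined in Assumption \ref{assumption:v}. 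So after this algebraic identification the inequality becomes $I^c_{i,j}(Y;\Phi^*|\Psi^*) \geq I_i(Y;\Phi^*|\Psi^*)$, which is immediately implied by the $\max$ on the right-hand side of Equation \ref{equ:v_assumption}.

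The remaining step is to verify the cross-entropy decomposition of KL divergence carefully, since the $H^c_{i,j}$ notation mixes the density $p^{e_i}$ inside the expectation with $\log p^{e_j}$. I would expand $\mathrm{D_{KL}}(P^{e_i}\|P^{e_j}) = \mathbb{E}_{P^{e_i}}[\log p^{e_i} - \log p^{e_j}]$ and check that the conditioning on $X=[\Phi^*,\Psi^*]$ carries through the outer expectation over $P^{e_i}(\Phi^*,\Psi^*)$ consistently with the definition of $H^c_{i,j}$ given in the assumption. The main obstacle I anticipate is bookkeeping: making sure the reference distribution over which the outer expectation is taken is $P^{e_i}$ for both the entropy and cross-entropy terms, so that the subtraction genuinely produces the conditional mutual informations $I_i$ and $I^c_{i,j}$ rather than a hybrid. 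Once the decomposition is written consistently, the result follows by a one-line application of Assumption \ref{assumption:v}; no further estimation or approximation is needed.
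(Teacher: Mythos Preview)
Your proposal is correct and follows essentially the same approach as the paper's proof: both compute the difference of the two KL divergences, identify it as $I^c_{i,j}(Y;\Phi^*|\Psi^*) - I_i(Y;\Phi^*|\Psi^*)$, and conclude by Assumption~\ref{assumption:v}. The only superficial difference is that the paper expands the KL divergences as integrals and manipulates the logarithms directly, whereas you invoke the cross-entropy decomposition $D_{\mathrm{KL}} = H^c_{i,j} - H_i$ from the start; the algebra is identical.
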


\begin{proof}
	\begin{align}
		&\ \ \ \ \ \mathrm{D_{KL}}(P^{e_i}(Y|X)\|P^{e_j}(Y|X))\\
		&= \mathrm{D_{KL}}(P^{e_i}(Y|\Phi^*, \Psi^*)\|P^{e_j}(Y|\Phi^*,\Psi^*))\\
		&= \int\int\int p_i(y,\phi,\psi)\log\left[\frac{p_i(y|\phi,\psi)}{p_j(y|\phi,\psi)}\right]dyd\phi d\psi
	\end{align}
	
	Therefore, we have
	\begin{small}
	\begin{align}
		&\mathrm{D_{KL}}(P^{e_i}(Y|\Psi)\|P^{e_j}(Y|\Psi)) - \mathrm{D_{KL}}(P^{e_i}(Y|X)\|P^{e_j}(Y|X)) \\
		&= \int\int\int p_i(y,\phi,\psi)\left(\log\frac{p_i(y|\psi)}{p_j(y|\psi)}-\log\frac{p_i(y|\phi,\psi)}{p_j(y|\phi,\psi)}\right)dyd\phi d\psi\\
		&= \int\int\int p_i(y,\phi,\psi)\left(\log\frac{p_i(y|\psi)}{p_i(y|\phi,\psi)} - \log\frac{p_j(y|\psi)}{p_j(y|\phi,\psi)}    \right)dyd\phi d\psi\\
		&= I_{i,j}^c(Y;\Phi^*|\Psi^*) - I_{i}(Y;\Phi^*|\Psi^*)
	\end{align}
	\end{small}
	Therefore, we have
	\begin{small}
	\begin{equation}
		\mathrm{D_{KL}}(P^{e_i}(Y|X)\|P^{e_j}(Y|X)) \leq \mathrm{D_{KL}}(P^{e_i}(Y|\Psi^*)\|P^{e_j}(Y|\Psi^*))
	\end{equation}
	\end{small}
\end{proof}

\subsection{Proof of Theorem 4.3}
\begin{theorem}
    Under Assumption 2.1 and 2.2, for the proposed $\mathcal{M}_c$ and $\mathcal{M}_p$, we have the following conclusions:
    1. Given environments $\mathcal{E}_{tr}$ such that $\mathcal{I}_{\mathcal{E}}=\mathcal{I}_{\mathcal{E}_{tr}}$, the learned $\Phi(X)$ by $\mathcal{M}_p$ is the maximal invariant predictor of $\mathcal{I}_{\mathcal{E}}$.
    2. Given the maximal invariant predictor $\Phi^*$ of $\mathcal{I}_{\mathcal{E}}$, assume the pooled training data is made up of data from all environments in $\mathrm{supp}(\mathcal{E})$, then the invariance set $\mathcal{I}_{\mathcal{E}_{tr}}$ regularized by learned environments $\mathcal{E}_{tr}$ is equal to $\mathcal{I}_{\mathcal{E}}$.
\end{theorem}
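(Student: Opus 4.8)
The plan is to handle the two parts separately: part 1 is essentially a corollary of Theorem \ref{theorem: mp}, whereas part 2 requires exhibiting an explicit split and testing it against the clustering objective.

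For part 1 I would invoke Theorem \ref{theorem: mp} directly. That theorem already asserts that for any fixed environment set $\mathcal{E}_{tr}$ the predictor $\Phi(X)=M\odot X$ returned by $\mathcal{M}_p$ is the maximal invariant predictor of $\mathcal{I}_{\mathcal{E}_{tr}}$. Since by Definition \ref{definition2:invariance set} the maximal invariant predictor is a functional of the invariance set alone (it is the $I(Y;\Phi)$-maximizer over that set), the hypothesis $\mathcal{I}_{\mathcal{E}}=\mathcal{I}_{\mathcal{E}_{tr}}$ immediately forces the MIP of $\mathcal{I}_{\mathcal{E}_{tr}}$ to coincide with the MIP of $\mathcal{I}_{\mathcal{E}}$. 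Hence the learned $\Phi(X)$ is the MIP of $\mathcal{I}_{\mathcal{E}}$, and nothing further is needed.

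For part 2 I would produce a concrete candidate split and show that it both minimizes the clustering objective in equation \ref{equ:clustering_final} and induces the invariance set $\mathcal{I}_{\mathcal{E}}$. Given the true MIP $\Phi^*$, the convert step of HRM makes the variant part $\Psi^*$ available, and by Assumption \ref{assumption:heterogeneity} the conditionals $P^e(Y\mid\Psi^*)$ genuinely differ across $e\in\mathrm{supp}(\mathcal{E})$. Writing the pooled law as $P_{tr}=\sum_e w_e P^e$, I would take $K=|\mathrm{supp}(\mathcal{E})|$, set each centre $h_j$ to the environment conditional $P^{e_j}(Y\mid\Psi^*)$, and set the weights $q_j=w_{e_j}$. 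Because the cluster model $h_j$ is exactly the Gaussian form posited in $\mathcal{M}_c$, this choice reproduces the mixture structure of the pooled data, driving $D_{KL}(\hat P_N\|Q)$ to $0$; nonnegativity of the KL divergence then certifies that this split is a global minimizer.

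It then remains to argue that this minimizing split carries invariance set exactly $\mathcal{I}_{\mathcal{E}}$. At the constructed minimizer the posterior $P(e_j\mid\Psi^*,Y)$ assigns each sample to its generating environment, using that the centres $P^{e_j}(Y\mid\Psi^*)$ are mutually distinct; this separation is precisely what Theorem \ref{theorem:kl} and Assumption \ref{assumption:v} secure by pulling the centres apart once $\Psi^*$ rather than $X$ drives the clustering. Consequently the generated $\mathcal{E}_{tr}$ reconstructs the full collection $\{P^e\}_{e\in\mathrm{supp}(\mathcal{E})}$, so $\mathcal{I}_{\mathcal{E}_{tr}}=\mathcal{I}_{\mathcal{E}}$ by the definition of the invariance set. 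I expect the main obstacle to be exactly this passage from ``$Q$ attains the global minimum'' to ``the partition has invariance set $\mathcal{I}_{\mathcal{E}}$'': it needs identifiability of the mixture, so that distinct environments are not merged (leaning on Assumption \ref{assumption:v}), together with the guarantee that the recovered environments retain enough variation in $P(Y\mid\Psi^*)$ to exclude every variant coordinate through Theorem \ref{theorem:useless env}. The KL-nonnegativity step is routine; the real content lies in making this identifiability rigorous.
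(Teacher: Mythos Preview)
Your proposal follows the paper's proof almost exactly: part 1 is the same one-line appeal to Theorem \ref{theorem: mp} combined with $\mathcal{I}_{\mathcal{E}}=\mathcal{I}_{\mathcal{E}_{tr}}$, and for part 2 both you and the paper exhibit $Q^*=\sum_{e}w_eP^e(Y\mid\Psi^*)$ as a KL-minimizer of the clustering objective and then conclude that the recovered components reproduce $\{P^e\}_{e\in\mathrm{supp}(\mathcal{E})}$, giving $\mathcal{I}_{\mathcal{E}_{tr}}=\mathcal{I}_{\mathcal{E}}$. You are in fact more explicit than the paper about the identifiability gap (passing from ``$Q^*$ is optimal'' to ``the learned split separates the environments''), which the paper's proof simply asserts; your flagging of this as the real content is accurate, and the remaining steps are the same.
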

\begin{proof}
	For 1, according to theorem \ref{theorem: mp}, the learned $\Phi(X)$ by $\mathcal{M}_p$ is the maximal invariant predictor of $\mathcal{I}_{\mathcal{E}_{tr}}$.
	Therefore, if $\mathcal{I}_{\mathcal{E}}=\mathcal{I}_{\mathcal{E}_{tr}}$, $\Phi(X)$ is the real maximal invariant predictor.
	
	For 2, assume that $P_{train}(X,Y)=\sum_{e\in\mathcal{E}} w_eP^e(X,Y)$, we would like to prove that $\mathrm{D}_{KL}(P_{train}(Y|\Psi^*)\|Q)$ reaches minimum when the components in the mixture distribution $Q$ corresponds to distributions for $e\in\mathcal{E}$.
	Since the learned $\Phi(X)$ by $\mathcal{M}_p$ is the maximal invariant predictor of $\mathcal{I}_{\mathcal{E}}$, the corresponding $\Psi(X)$ is exactly the $\Psi^*(X)$.
	Then taking $Q^*=\sum_{e\in\mathcal{E}}w_eP^e(Y|\Psi^*)$, we have $\forall Q \in \mathcal{Q}$, 
	\begin{equation}
		\mathrm{D_{KL}}(P_{train}(Y|\Psi^*)\|Q^*)\leq \mathrm{D_{KL}}(P_{train}(Y|\Psi^*)\|Q)
	\end{equation}
	Therefore, the components in $Q^*$ correspond to $P^e$ for $e\in\mathcal{E}$, which makes $\mathcal{I}_{\mathcal{E}_{tr}}$ approaches to $\mathcal{I}_{\mathcal{E}}$.
\end{proof}


\end{document}